\newtheorem{theorem}{Theorem}
\theoremstyle{remark}
\newtheorem{remark}{Remark}
\newtheorem{corollary}[theorem]{Corollary}
\newtheorem{proposition}{Proposition}
\newtheorem{assumption}{Assumption}
\newcommand{\gurobi}{\textsl{Gurobi}}
\newcommand{\scip}{\textsl{SCIP}}
\newcommand{\ipopt}{\textsl{Ipopt}}
\newcommand{\baron}{\textsl{BARON}}
\newcommand{\antigone}{\textsl{ANTIGONE}}
\newcommand{\couenne}{\textsl{COUENNE}}
\title{Learning to Optimize for Mixed-Integer Non-linear Programming with Feasibility Guarantees}
\author{%
  Bo Tang \\
  Department of Mechanical and Industrial Engineering \\
  University of Toronto \\
  Toronto, ON M5S 1A1, Canada \\
  \texttt{bo.tang@mail.utoronto.ca} \\
  \And
  Elias B.~Khalil \\
  Department of Mechanical and Industrial Engineering \\
  University of Toronto \\
  Toronto, ON M5S 1A1, Canada \\
  \texttt{elias.khalil@mie.utoronto.ca} \\
  \And
  Ján Drgoňa \\
  Department of Civil and Systems Engineering \\
  The Ralph O'Connor Sustainable Energy Institute (ROSEI) \\
  Data Science and AI Institute (DSAI) \\
  Johns Hopkins University \\
  Baltimore, MD 21218, USA \\
  \texttt{jdrgona1@jh.edu} \\
}
\begin{document}

\maketitle

\begin{abstract}
Mixed-integer nonlinear programs (MINLPs) arise in domains such as energy systems, process engineering, and transportation, and are notoriously difficult to solve at scale due to the interplay of discrete decisions and nonlinear constraints. In many practical settings, these problems appear in parametric form, where objectives and constraints depend on instance-specific parameters, creating the need for fast and reliable solutions across related instances. While learning-to-optimize (L2O) methods have shown strong performance in continuous optimization, extending them to MINLPs requires enforcing both feasibility and integrality within a data-driven framework. We propose an L2O approach tailored to parametric MINLPs that generates instance-specific solutions using integer correction layers to enforce integrality and a gradient-based projection to ensure feasibility of the inequality constraints. Theoretically, we provide asymptotic and non-asymptotic convergence guarantees of the projection step. Empirically, the framework scales to MINLPs with tens of thousands of variables and produces feasible high-quality solutions within milliseconds, often outperforming traditional solvers and heuristic baselines in repeated-solve settings.
\end{abstract}

\section{Introduction}

Mixed-integer optimization arises in a broad range of real-world applications, including pricing~\cite{KLEINERT2021100007}, battery dispatch~\citep{nazir2021guaranteeing}, transportation~\citep{Schouwenaars2001}, and optimal control~\citep{Marcucci2021}. While many such problems can be represented as mixed-integer linear programs (MILPs), a substantial class of high-impact applications additionally involves nonlinear physical, engineering, or economic relationships. These problems combine discrete and nonlinear components and are naturally modeled as mixed-integer nonlinear programs (MINLPs). Examples include alternating current optimal power flow~\citep{frank2016introduction}, energy generation scheduling~\citep{frangioni2006solving}, and natural gas network operations~\citep{hahn2017mixed}, as well as production planning~\citep{sridhar2014models}, process design~\citep{biegler1997systematic}, and portfolio optimization~\citep{mansini2015linear}. Although MILP solvers are highly mature and nonlinear relationships are often approximated using piecewise-linear models, these approximations may not fully capture the underlying physics or economics and can lead to nontrivial modeling errors. This motivates the need for scalable algorithms that operate directly on MINLPs rather than rely solely on MILP reformulations.

Unlike MILPs, which benefit from decades of advances in exact algorithms~\citep{land2010automatic} and heuristics~\citep{crama2005local, johnson1997traveling}, MINLPs remain considerably more challenging due to the interaction between discrete variables and nonlinear objective and constraint functions. Classical MINLP solvers rely on decomposition and tree-search strategies and, even in the convex case, must repeatedly solve large nonlinear relaxations. Foundational approaches such as outer approximation~\citep{fletcher1994solving}, spatial branch-and-bound~\citep{belotti2009branching}, and related decomposition schemes~\citep{nowak2005relaxation} rely on alternating between MILP master problems and NLP subproblems or on iteratively refining cutting planes. These procedures often incur substantial computational cost as the problem size grows. For nonconvex MINLPs, global solvers such as \baron{}~\citep{sahinidis1996baron}, \antigone{}~\citep{misener2014antigone}, and \couenne{}~\citep{belotti2009branching} still face worst-case exponential search trees and expensive nonlinear solves. Considerable solver effort is frequently devoted to finding feasible solutions at all, motivating primal heuristics such as feasibility pump, diving, and local branching~\citep{berthold2015heuristic}. Although helpful in improving incumbents, these heuristics do not provide global guarantees and become ineffective in highly nonconvex landscapes. Thus, scalability remains a challenge for general-purpose MINLP solvers, particularly in large or time-critical applications.

In many real-world applications, the same optimization model must be solved repeatedly under changing inputs: For example, when delivery routes are updated, production schedules are revised, or portfolio allocations are rebalanced. Although each instance differs in its parameters, the underlying combinatorial or algebraic structure remains fixed. Learning-to-Optimize (L2O) leverages this repeated structure by training models on families of related instances to replace or accelerate components of conventional solvers. Existing work has demonstrated substantial speedups for linear, integer, and quadratic programs~\citep{park2023self, khalil2016learning, donti2021dc3}. However, extending L2O to more general settings such as MINLP remains largely unexplored, motivating the present work.

Recent progress in machine learning for optimization has largely followed two complementary directions. The first augments classical algorithms by learning components within the solver pipeline (branching, cutting, heuristics, or other algorithmic parameters) while preserving the structure and guarantees of exact methods~\citep{khalil2016learning, li2024pdhg}. The second predicts solutions directly through end-to-end architectures or other solver-free mappings~\citep{donti2021dc3, kool2018attention}. Although these approaches have demonstrated substantial speedups, extending them to mixed-integer nonlinear programs is difficult. Solver-augmented methods still depend on expensive nonlinear relaxations, whereas solver-free models lack built-in mechanisms to enforce combinatorial or nonlinear feasibility.

\paragraph{Contributions.}
This work makes the following contributions: 
(i) we propose the first general-purpose \emph{solver-free} L2O framework for parametric MINLPs that generates integer solutions directly, without relying on continuous relaxations or solver-augmented components; 
(ii) we introduce two differentiable integer-correction layers that propagate gradient information through discrete operations, enabling neural networks to construct and iteratively refine integer solutions; 
(iii) we develop a lightweight feasibility-projection heuristic that leverages gradient signals from integer correction layers to satisfy mixed-integer inequality constraints  without invoking external solvers; 
(iv) we provide asymptotic and non-asymptotic theoretical guarantees establishing conditions under which the projection step converges and recovers feasible solutions; and 
(v) we demonstrate scalable empirical performance on challenging benchmarks, achieving high-quality solutions with substantial speedups on repeated parametric instances.

\section{Related Work}

L2O seeks to improve optimization efficiency or scalability using data-driven models. Existing work can be broadly divided into \emph{solver-augmented} methods that guide or modify classical algorithms, and \emph{solver-free} methods that attempt to predict solutions directly. While these approaches have shown substantial progress for continuous, extending them to general MINLPs remains challenging.

\subsection{Solver-Augmented Methods}

\paragraph{Algorithm Unrolling.}
Algorithm unrolling has emerged as an influential paradigm for integrating classical continuous optimization with data-driven learning. Early work, such as LISTA~\citep{gregor2010learning} demonstrated that iterative soft-thresholding for sparse coding can be reformulated as a neural network whose step sizes and linear transforms are learned from data. This idea has since been extended to a variety of continuous optimization algorithms. For example, PDHG-Net~\citep{li2024pdhg, yang2024efficient} unrolls primal–dual hybrid gradient iterations into a trainable architecture, enabling adaptive update rules while retaining the structure of the original method.

\paragraph{Learning to Guide.}
There is also a substantial literature that leverages machine learning to guide decision-making within classical discrete solvers. This work retains the branch-and-cut framework while learning improved branching~\citep{khalil2016learning, alvarez2017machine, GCNN, zarpellon2021parameterizing}, node selection~\citep{he2014learning, mattick2023reinforcement}, and cut selection~\citep{Deza_2023, wang2023learning}. Related ideas have been used to enhance primal heuristics in routing~\citep{xin2021neurolkh}. These approaches reduce search effort by adapting solver behavior to instance structure, but feasibility and optimality are guaranteed by the solver itself.

\paragraph{Predict and Search.}
A complementary research stream uses machine learning to predict partial solution information, which is then used to restrict or initialize the subsequent search. Representative approaches include variable fixing and data-driven neighborhood design~\citep{ding2020accelerating, song2020general}, as well as learned partial assignments such as those used in Neural Diving~\citep{nair2020solving}. More recent work employs GNN-based predictors to repeatedly propose promising neighborhoods that are refined by a solver within a large-neighborhood search framework~\citep{han2023gnn, huangcontrastive}. In all such methods, machine learning serves to prioritize or narrow the search space, while the solver remains responsible for producing a feasible solution.

\subsection{Solver-Free Methods}

\paragraph{Supervised Learning.}
Supervised prediction learns a mapping from problem inputs to decisions by imitating solutions computed offline. Early neural sequence models such as Pointer Networks~\citep{vinyals2015pointer} and their routing extensions~\citep{joshi2019efficient} demonstrated that neural architectures can approximate sequential solutions end-to-end. Related ideas directly predict decision variables, where predictors approximate feasible solutions in settings such as power flow~\citep{fioretto2020predicting}. Despite their efficiency at inference time, supervised methods require large datasets of optimal or near-optimal solutions~\citep{gleixner2021miplib}, which are expensive to compute and limit scalability to larger problem instances. 

\paragraph{Reinforcement Learning.}
Reinforcement learning constructs solutions through sequential decision-making without relying on supervised labels. Policy-gradient has been used to generate routing solutions~\citep{bello2016neural, kool2018attention}, while graph-based models have learned node-selection strategies for problems such as maximum cut and vertex cover~\citep{khalil2017learning}. However, its performance is highly sensitive to reward design and exploration strategy, and training is often computationally expensive and unstable~\citep{vesselinova2020learning}, which limits its practicality for large or tightly constrained optimization tasks.

\paragraph{Self-Supervised Learning.}
Self-supervised approaches eliminate the need for labeled solutions by minimizing differentiable measures of optimality or feasibility. Representative formulations include primal–dual residual minimization~\citep{park2023self} and Lagrangian-like loss that combine objective value and constraint violation~\citep{donti2021dc3}. More recently, recurrent architectures~\citep{luken2024self} refine feasibility through iterative evaluations of differentiable residuals. While promising for continuous domains, these methods lack feasibility guarantees and do not support the integer-valued outputs required for the MINLP regime studied in this work.

\subsection{Constraint-Aware Learning}

Although L2O has progressed rapidly, enforcing feasibility remains a central challenge. Standard predictive models such as linear regression and neural networks are inherently unconstrained, requiring additional mechanisms to ensure that their outputs respect constraints. Existing approaches broadly fall into hard-constraint and soft-constraint formulations.

\paragraph{Hard Constraint.}
Hard-constraint methods typically integrate feasibility directly into the model architecture or output parameterization. Examples include neural layers that preserve linear operator constraints by construction~\citep{hendriks2020linearly}, as well as reparameterization schemes that map unconstrained predictions into feasible sets to satisfy equalities or inequalities exactly~\citep{frerix2020homogeneous, rangarajan2022expressing}. Differentiable barrier-based layers~\citep{kervadec2022constrained} emulate interior-point updates to enhance constraint adherence. DC3~\citep{donti2021dc3} introduces a differentiable correction mechanism that adjusts infeasible predictions toward the feasible region through gradient-based updates. Although effective for continuous feasibility, these approaches do not extend to enforcing integrality.

\paragraph{Soft Constraint.}
Soft-constraint methods incorporate feasibility through differentiable penalty terms, drawing on classical penalty and augmented Lagrangian techniques~\citep{pathak2015constrained, jia2017constrained}. While they do not provide formal guarantees, such losses can improve approximate feasibility and training stability in high-dimensional settings~\citep{marquez2017imposing}. Penalty-based formulations have been widely used to encode physical or operational constraints in continuous domains~\citep{fioretto2020predicting, pan2020deepopf}. However, unlike hard-constraint mechanisms, penalty-based approaches cannot enforce exact feasibility and introduce additional weighting parameters that are often problem-dependent.

\subsection{Handling Discrete Outputs}

Despite progress in constraint-aware learning, solver-free L2O methods face a fundamental limitation for discrete problems: the inability of standard gradient-based models to represent and learn discrete decision variables. Integer-valued mappings are piecewise constant and nondifferentiable, causing gradients to vanish almost everywhere and preventing effective end-to-end training.

\paragraph{Surrogate Relaxations.}
A common strategy for handling discrete decisions is to replace integer operators with continuous surrogates, such as surrogate-based linearizations~\citep{bonami2022classifier, ferber2023surco} or Gumbel-noise reparameterizations~\citep{jang2016categorical, maddison2016concrete}. While these relaxations enable gradient-based training, they optimize over a continuous superset of the original discrete feasible region. As a result, large integrality gaps between the relaxed and integer problems~\citep{schrijver1998theory, vazirani2001approximation} can lead to substantial misalignment between the learned objective and the true discrete objective, causing divergence from integer optimality.

\paragraph{Straight-Through Estimators for Binary Mappings.}
A complementary technique is the \emph{Straight-Through Estimator} (STE)~\citep{bengio2013estimating}, which enables gradient flow through discrete mappings by replacing the zero derivative of rounding or binarization. In its simplest form, STE approximates the backward Jacobian of a discrete operator by the identity matrix. STE has been widely used in quantized neural networks~\citep{courbariaux2015binaryconnect, hubara2016binarized}. However, STE is agnostic to the structure of the discrete decision space and does not guide the model toward more favorable discrete decisions.

\section{Problem Formulation}

We consider a parametric mixed-integer nonlinear program (pMINLP), where each instance is identified by a parameter vector ${\bm{\xi}} \in \mathbb{R}^{n_{\xi}}$. For a given realization of ${\bm{\xi}}$, the problem takes the form
\begin{equation}
\vspace{-6pt}
\label{eq:minlp}
\min_{{\bm{x}} \in \mathcal{X}} \; f({\bm{x}}, {\bm{\xi}})
\qquad \text{s.t.} \qquad 
{\bm g}({\bm{x}}, {\bm{\xi}}) \le \bm{0},
\vspace{-6pt}
\end{equation}
where the decision space is $\mathcal{X} := \mathbb{R}^{n_r} \times \mathbb{Z}^{n_z},$ and ${\bm{x}} = ({\bm{x}}_r, {\bm{x}}_z)$ contains $n_r$ continuous variables and $n_z$ integer variables. The vector ${\bm g}({\bm{x}},\bm{\xi}) = [g_1({\bm{x}},\bm{\xi}),\dots,g_{n_c}({\bm{x}},\bm{\xi})]^\top $ collects the $n_c$ inequality constraint functions. For each parameter realization ${\bm{\xi}}$, the feasible set is
$
\mathcal{F}({\bm{\xi}})
    := \{\, {\bm{x}} \in \mathcal{X} : {\bm g}({\bm{x}}, {\bm{\xi}}) \le \bm{0} \,\},
$
which contains all mixed-integer decisions satisfying the inequality constraints.

\begin{table}[ht!]
\vspace{-6pt}
\centering
\setlength{\tabcolsep}{6pt}
\renewcommand{\arraystretch}{0.9}
\caption{Notation used throughout the paper.}
\label{tab:notation}
\resizebox{0.98\linewidth}{!}{
\begin{tabular}{ll|ll}
\toprule
\textbf{Symbol} & \textbf{Description} &
\textbf{Symbol} & \textbf{Description} \\
\midrule
$\bm{\xi}$ 
    & Instance (problem) parameters &
$\bm{x} = (\bm{x}_r,\bm{x}_z)$ 
    & Continuous and integer decision variables \\
$\mathcal{X}$ 
    & Decision domain $\mathbb{R}^{n_r} \!\times\! \mathbb{Z}^{n_z}$ &
$\mathcal{F}(\bm{\xi})$
    & Feasible set $\{ \bm{x} \in \mathcal{X} : {\bm g}(\bm{x},\bm{\xi}) \le \bm{0} \}$ \\
$f(\bm{x},\bm{\xi})$
    & Objective function &
${\bm g}(\bm{x},\bm{\xi})$
    & Vector of inequality constraint functions \\
$\bar{\bm{x}}$ 
    & Solution to the continuous relaxation &
$\hat{\bm{x}}$ 
    & Predicted mixed-integer solution \\
$n_r, n_z$
    & Numbers of continuous and integer variables &
$n_c$
    & Number of inequality constraints \\
\bottomrule
\end{tabular}
}
\vspace{-6pt}
\end{table}

\paragraph{Continuous Relaxation}
We also make use of the continuous relaxation of~\Cref{eq:minlp}, obtained by removing the integrality constraints and optimizing over the continuous space $\mathbb{R}^{n_r+n_z}$. Let ${\bar{\bm{x}}}({\bm{\xi}})$ denote an optimal solution of the relaxed problem. Obviously, its solution generally does not satisfy the integrality requirements.  In our framework, ${\bar{\bm{x}}}$ is interpreted as a continuous intermediate representation produced by the neural network and is not required to satisfy the constraints.

\paragraph{Learning Objective}
Since each instance is characterized by its parameter vector ${\bm{\xi}}$, our goal is to learn a prediction model ${\bm{\psi}}_{\Theta} : \mathbb{R}^{n_{\xi}} \to \mathcal{F}({\bm{\xi}})$, with trainable parameters~$\Theta$, that directly produces a mixed-integer prediction ${\hat{\bm{x}}}^i = {\bm{\psi}}_{\Theta}({\bm{\xi}}^i)$ for each instance $i$.

\section{Methodology}

We now present our L2O framework for parametric MINLPs.  \Cref{fig:pipeline} shows that the method consists of two components: 
(i) integer correction layers embedded within the network, and 
(ii) an integer feasibility projection applied at inference. 
For clarity, we omit the instance index $i$ throughout.
\begin{figure}[htbp!]
\vspace{-6pt}
    \centering    \includegraphics[width=0.98\textwidth]{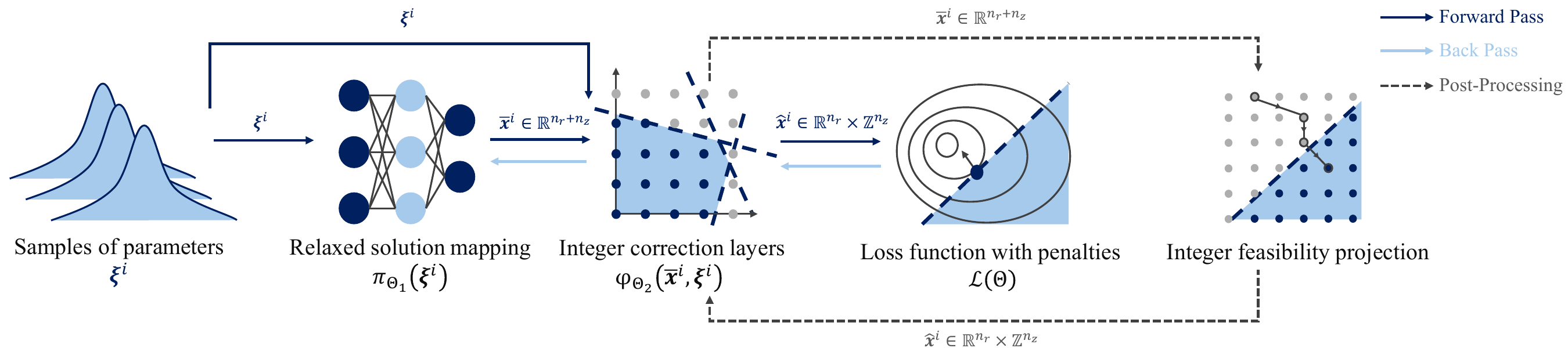}
    \caption{Overview of the proposed solver-free L2O pipeline for parametric MINLPs. Given parameters $\bm{\xi}$, the model generates a relaxed solution, applies integer correction, and refines feasibility through an iterative projection step.}
    \label{fig:pipeline}
\vspace{-6pt}
\end{figure}

\subsection{Learning Objective}

We model the entire architecture, including the relaxed mapping and the integer correction layers, as a single neural network $\bm{\psi}_{\Theta} : \mathbb{R}^{n_{\xi}} \to \mathbb{R}^{n_r} \times \mathbb{Z}^{n_z},$  which maps an instance parameter vector $\bm{\xi}$ to a mixed-integer decision $\hat{\bm{x}}$.
The internal decomposition of $\bm{\psi}_{\Theta}$ into a relaxed solution mapping and an integer correction module
is detailed in~\Cref{subsec:int_corr}.

Our training formulation is self-supervised: we do not assume access to optimal or even feasible solutions as labels and instead rely only on the objective and constraint functions. Given training instances $\bm{\xi}^i\}_{i=1}^m$, we minimize a soft-constrained empirical risk
\begin{equation}
\vspace{-6pt}
\label{eq:loss}
\mathcal{L}(\Theta)
= \frac{1}{m} \sum_{i=1}^m 
\Big[
    f(\hat{\bm{x}}^{i}, \bm{\xi}^{i})
    + \lambda\, \mathcal{V}(\hat{\bm{x}}^{i}, \bm{\xi}^{i})
\Big],
\qquad 
\hat{\bm{x}}^{i} = \bm{\psi}_{\Theta}(\bm{\xi}^{i}),
\vspace{-6pt}
\end{equation}
where $\lambda > 0$ balances optimality and feasibility, and $\mathcal{V}(\bm{x}, \bm{\xi}) := \big\| \bm{g}(\bm{x}, \bm{\xi})_+ \big\|_1$ penalizes violations of the inequality constraints.
Here, $(\cdot)_+$ is the elementwise positive part and $\| \cdot \|_1$ is the $\ell_1$ norm so that it aggregates violations across all constraints. This type of penalty is differentiable almost everywhere and has been widely used in self-supervised constrained learning~\citep{raissi2019physics, donti2021dc3}. The loss $\mathcal{L}(\Theta)$ is minimized using stochastic gradient methods such as \emph{Adam}~\citep{kingma2014adam}.

\subsection{Integer Correction Layers}
\label{subsec:int_corr}

We now formalize the critical component of our framework, the \textit{Integer Correction Layer} summarized in \Cref{algo:l2o}. This module is designed to handle the discrete nature of decision variables in MINLPs by transforming the relaxed outputs of the neural network into mixed-integer solutions. To move beyond fixed rounding heuristics, our correction layers incorporate learnable parameters that adaptively determine the rounding direction of each integer variable based on both the instance context and the relaxed solution. The overall mapping $\psi_{\Theta}:\mathbb{R}^{n_\xi}\mapsto\mathbb{R}^{n_r} \times \mathbb{Z}^{n_z}$ from an instance parameter vector ${\bm{\xi}}$ to a mixed-integer solution ${\hat{\bm{x}}}$ is performed in two stages:
\begin{enumerate}
    \item \textbf{Relaxed Solution Mapping:} The first stage applies a learnable mapping 
    $\bm{\pi}_{\Theta_1} : \mathbb{R}^{n_{\xi}} \mapsto \mathbb{R}^{n_r + n_z}$, parameterized by weights $\Theta_1$, producing a relaxed solution ${\bar{\bm{x}}} = \bm{\pi}_{\Theta_1}({\bm{\xi}})$ without enforcing integrality.
    \item \textbf{Integer Correction:} The second stage refines the relaxed solution ${\bar{\bm{x}}}$ through a correction module $\bm{\varphi}_{\Theta_2}: \mathbb{R}^{n_r + n_z} \times \mathbb{R}^{n_\xi} \mapsto \mathbb{R}^{n_r} \times \mathbb{Z}^{n_z}$, which adaptively determines the rounding direction of each integer variable based on both the instance parameter ${\bm{\xi}}$ and the relaxed solution ${\bar{\bm{x}}}$, producing the final mixed-integer output $\hat{\bm{x}}$.
\end{enumerate}

Conceptually, this design resembles the idea of \textit{Relaxation Enforced Neighborhood Search} (RENS)~\citep{berthold2014rens}, which explores integer-feasible solutions in the neighborhood of a continuous relaxation. In contrast, our approach performs this search implicitly and differentiably via an end-to-end trained neural network.

\begin{figure}[htb!]
\vspace{-15pt}
\centering
\begin{minipage}[t]{0.47\textwidth}
\begin{algorithm}[H]  
\footnotesize
\setlength{\baselineskip}{0.85\baselineskip}
\caption{Integer Correction $\varphi_{\Theta_2}(\bar{\bm x}, {\bm \xi})$}\label{algo:l2o}
\begin{algorithmic}[1]
\State \textbf{Input:} initial relaxed solution $\bar{\bm x}$, parameters ${{\bm \xi}}$, and neural network $\delta_{\Theta_2} (\cdot)$
\State Obtain hidden states ${\bm h} \gets \delta_{\Theta_2} (\bar{\bm x}, {\bm \xi})$
\State Update continuous variables $\hat{{\bm x}}_r \gets \bar{{\bm x}}_r + {\bm h}_r$
\State Round integer variables down $\hat{{\bm x}}_z \gets \lfloor \bar{{\bm x}}_z \rfloor$
\If{using \textit{Rounding Classification} (RC)} 

   \State Obtain values ${\bm v} \gets \text{Gumbel-Sigmoid}({\bm h}_z)$
\ElsIf{using \textit{Learnable Threshold} (LT)} 
  \State Obtain thresholds ${\bm t} \gets \text{Sigmoid} ({\bm h})$
  \State Obtain ${\bm v} \gets \text{Sigmoid}(10 \cdot (\bar{{\bm x}}_z - \hat{{\bm x}}_z -  {\bm r}))$
\EndIf
\State Obtain rounding directions ${\bm b} \gets \mathbb{I} \big({\bm v} > 0.5 \big)$
\State Update integer variables $\hat{{\bm x}}_z \gets \hat{{\bm x}}_z + {\bm b}$
\State \textbf{Output:} a mixed-integer solution $\hat{\bm x}$
\end{algorithmic}
\end{algorithm}
\end{minipage}%
\hfill
\begin{minipage}[t]{0.50\textwidth}
\begin{algorithm}[H]
\footnotesize
\setlength{\baselineskip}{0.85\baselineskip}
\caption{Feasibility Projection $\phi(\bar{\bm x}, \bm \xi)$}\label{algo:proj}
\begin{algorithmic}[1]
\State \textbf{Input:} initial relaxed solution $\bar{\bm x}$,  parameters $\bm \xi$, integer correction layer $\varphi_{\Theta_2} (\cdot)$, and step size $\eta$
\While{True}
  \State Obtain updated integers $\hat{\bm x} \gets \varphi_{\Theta_2}(\bar{\bm x}, \bm \xi)$ (\Cref{algo:l2o})
  \State Compute violations $\mathcal{V}( \hat{\bm x}, \bm \xi) \gets  \| {\bm g}({\bm \hat{x}}, \bm \xi)_+ \|_1$
  \If{$\mathcal{V}( \hat{\bm x}, \bm \xi) = 0$}
    \State Break
  \Else
    \State Compute gradients ${\bm d} \gets \nabla_{\bar{\bm x}} \mathcal{V}(\hat{\bm x}, \bm \xi)$
    \State Update relaxed solution $\bar{\bm x} \gets \bar{\bm x} - \eta {\bm d}$
  \EndIf
\EndWhile
\State \textbf{Output:} a mixed-integer solution $\hat{\bm x}$
\end{algorithmic}
\end{algorithm}
\end{minipage}
\vspace{-12pt}
\end{figure}

The integer correction layers achieve differentiable rounding through the combination of the STE and Sigmoid functions. The neural network $\delta_{\Theta_2}(\bar{\bm{x}}, \bm{\xi})$ outputs hidden representations that determine rounding directions for each integer variable, thereby learning adaptive rounding behaviors conditioned on the instance ${\bm{\xi}}$. During training, the loss in~\Cref{eq:loss} jointly updates the network parameters $\Theta = \Theta_1 \cup \Theta_2$, accounting for both the objective value and constraint violations of the predicted mixed-integer solution $\bm{\hat{x}}$. The overall end-to-end mapping from an instance parameter ${\bm{\xi}}$ to a mixed-integer solution ${\hat{\bm{x}}}$ is expressed as
\begin{equation}
\label{eq:psi_nested}
    \hat{\bm{x}} = \bm{\psi}_{\Theta}(\bm{\xi}) 
    = \bm{\varphi}_{\Theta_2}\big( \bm{\pi}_{\Theta_1}(\bm{\xi}), \bm{\xi} \big),
\end{equation}

We propose two alternative designs for the correction layer ${\bm{\varphi}}_{\Theta_2}$: \textit{Rounding Classification} (RC) and \textit{Learnable Threshold} (LT). Both operate downstream of the relaxed-solution mapping $\bm{\pi}_{\Theta_1}: \mathbb{R}^{n_\xi} \to \mathbb{R}^{n_r+n_z}$ and take as inputs the relaxed output ${\bm{\bar{x}}}=\bm{\pi}_{\Theta_1}({\bm{\xi}})$ together with the instance parameters ${\bm{\xi}}$. A learnable component $\delta_{\Theta_2}(\bar{\bm{x}}, \bm{\xi})$ then produces hidden representations used to decide rounding. Despite this common interface, the mechanisms differ, as illustrated in~\Cref{fig:method_examples}: RC uses a probabilistic classification scheme via logits to choose the rounding direction for each integer variable, whereas LT predicts a continuous threshold vector that controls the rounding decision. Additional implementation details and theoretical analyses of RC and LT, including gradient derivations and Lipschitz smoothness bounds, are provided in Appendix~\ref{app:corr}.

\begin{figure}[htb!]
\vspace{-6pt}
    \centering      
    \includegraphics[width=0.95\textwidth]{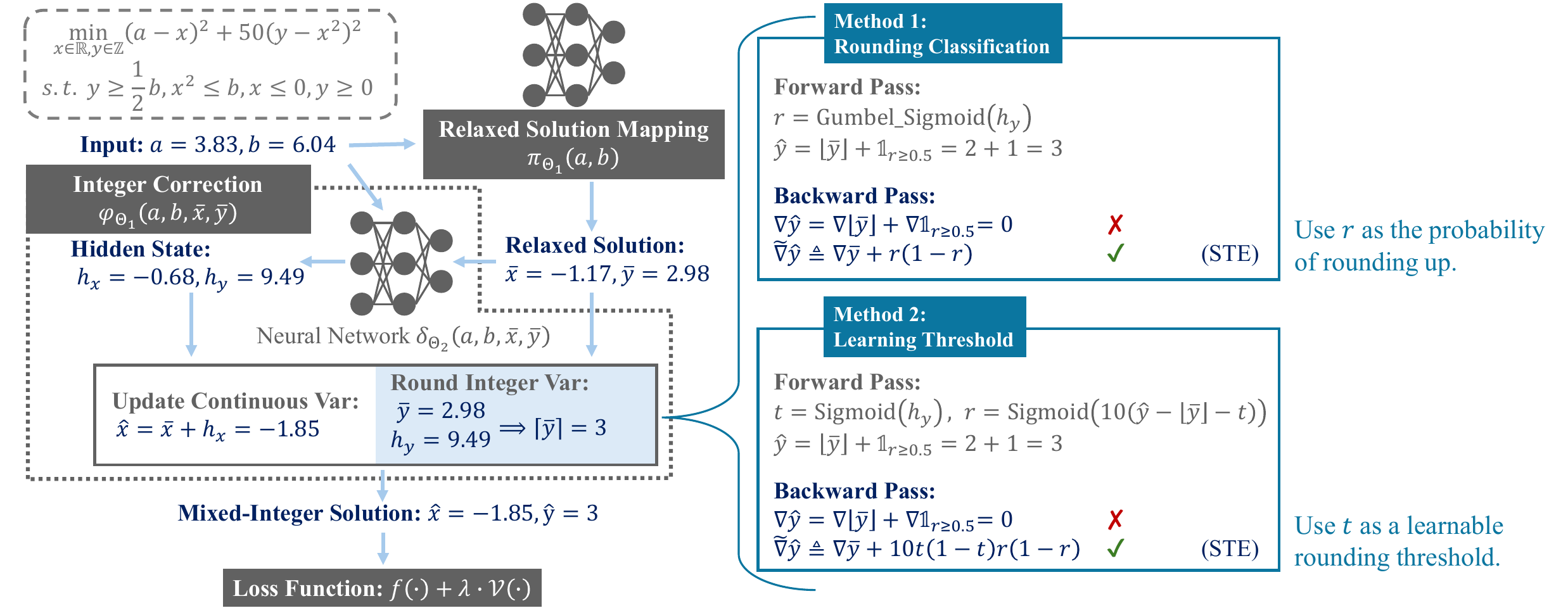}
    \caption{Two integer correction layers: (a) Rounding Classification and (b) Learning Threshold, both using STE for backpropagation.}
    \label{fig:method_examples}
\vspace{-6pt}
\end{figure}

\subsection{Integer Feasibility Projection}

While the proposed integer correction layers effectively enforce integrality, the soft-penalty formulation in~\Cref{eq:loss} cannot always ensure full constraint satisfaction. To address this limitation, we introduce an \textit{integer feasibility projection} procedure at inference time summarized in \Cref{algo:proj}. 

Starting from an initial relaxed solution $\bar{\bm{x}}$ predicted by the network, the projection iteratively refines the solution to reduce constraint violations via gradient-based updates in the continuous space. At each iteration, the relaxed $\bar{\bm{x}}$ is passed through the integer correction layer $\bm{\varphi}_{\Theta_2}(\bar{\bm{x}}, \bm{\xi})$ to produce an integer candidate $\hat{\bm{x}}$. Constraint violations are evaluated on $\hat{\bm{x}}$, and the resulting violation signal is used to update the relaxed $\bar{\bm{x}}$, thereby alternating between continuous feasibility refinement and integer correction. This process continues until convergence to a feasible integer solution.

In \Cref{algo:proj}, line~3 applies the integer correction layer to enforce integrality, while line~9 performs a gradient-based update on the relaxed variable to reduce the constraint violation $\mathcal{V}(\hat{\bm{x}})$, following the principle of~\citet{donti2021dc3}.
Unlike~\citet{donti2021dc3}, who integrate such projections within the training loop, we apply the projection exclusively during inference and use hard rounding in the forward pass to ensure integer-valued outputs. Conceptually, this iterative structure resembles the classical \textit{Feasibility Pump}~\citep{fischetti2005feasibility}, which alternates between rounding and projection to obtain feasible solutions. However, whereas the feasibility pump relies on repeatedly solving constrained subproblems, our integer correction step is learned offline.

While we apply this projection only at inference time in this work, we note that, in principle, the proposed integer feasibility projection could be differentiated through and integrated into end-to-end training. Differentiating through such projections requires maintaining deep computational graphs and handling implicit or higher-order gradient information, which introduces substantial computational and memory overhead and may lead to numerically unstable or low-quality gradient information in practice.  We therefore leave differentiable projection to future work.

\section{Theoretical Guarantees}

This analysis clarifies under what regularity conditions the correction layer ensures feasibility and how fast constraint violations diminish during training. To establish feasibility guarantees, we analyze the convergence behavior of the integer feasibility projection $\phi({\bm{x}}, {\bm{\xi}})$ for a fixed parametric instance ${\bm{\xi}}$. For simplicity of exposition, we omit the dependence on ${\bm{\xi}}$ and denote the integer correction layer by $\varphi({\bm{x}})$, the constraint functions by ${\bm{g}}({\bm{x}}) = [g_1({\bm{x}}), \ldots, g_{n_c}({\bm{x}})]$.
\Cref{thm:relu_exterior_convergence} provides conditions for asymptotic convergence, while \Cref{thm:non_asymptotic_relu} provides non-asymptotic convergence to the approximate feasible set. This result follows from the standard convergence rate of gradient descent on $L$-smooth functions and quantifies how fast the penalty gradient vanishes.

\begin{remark}
The results in this section establish convergence to an \emph{approximately feasible} set.
The analysis does not explicitly quantify the approximation error induced by the surrogate
gradients used for rounding.
\end{remark}

\subsection{Asymptotic Convergence of Integer Feasibility Projection}

\begin{assumption}[Regularity]
\label{assump:regularity}
The constraint functions ${\bm g}(\bm{x},\bm{\xi})$ and the integer-correction mapping $\varphi(\bm{x})$ are continuously differentiable, and their gradients are Lipschitz continuous on the domain of interest.
\end{assumption}

\begin{assumption}[Bounded constraint activity]
\label{assump:bounded_constraints}
The number of violated constraints is uniformly bounded along the trajectory generated by the algorithm.
\end{assumption}

\begin{remark}[Feasibility boundary]
We analyze the dynamics only on the violation region $\mathcal{D} := \{\bm{x} : \mathcal{V}(\bm{x}) > 0\}$, where the penalty $\mathcal{V}$ is differentiable. Once $\mathcal{V}(\bm{x}) = 0$ is reached, the constraints are satisfied and the procedure terminates.  This technical restriction avoids the nondifferentiability of the ReLU penalty at~0.
\end{remark}

\begin{theorem}[Asymptotic Convergence of Integer Feasibility Projection]
\label{thm:relu_exterior_convergence}
Under Assumptions~\ref{assump:regularity} and~\ref{assump:bounded_constraints}, gradient descent applied to 
$
\mathcal{V}({\bm{x}}) = \|{\bm g}(\varphi({\bm{x}}))_+\|_1
$
with step size $\eta \in (0, 1/L]$, where $L = \bar{n}_c(G_{\bm g} L_\varphi + G_\varphi L_{\bm g})$, satisfies the following properties:
\begin{enumerate}
    \item[\textnormal{(i)}] \textbf{L-smoothness:} 
    $\mathcal{V} \in C^1(\mathcal{D})$, with
    $
    \nabla \mathcal{V}({\bm{x}}) 
    = \sum_{j \in I_{\bm{x}}} 
    \nabla \varphi({\bm{x}})^\top \nabla g_j(\varphi({\bm{x}})),
    $
    and $\nabla \mathcal{V}$ is Lipschitz continuous on compact subsets of $\mathcal{D}$ 
    with Lipschitz constant at most $L$.
    
    \item[\textnormal{(ii)}] \textbf{Descent and vanishing gradient:} 
    Gradient descent generates a non-increasing sequence $\mathcal{V}({\bm{x}}^{(k)}) \to \mathcal{V}^\star \ge 0$, and
    $
    \lim_{k \to \infty} 
    \| \nabla \mathcal{V}({\bm{x}}^{(k)}) \| = 0.
    $
    
    \item[\textnormal{(iii)}] \textbf{Convergence to feasibility:} 
    If every ${\bm{x}}^* \in \mathcal{D}$ with $\mathcal{V}({\bm{x}}^*) > 0$ satisfies $\exists j \in I_{{\bm{x}}^*}$ such that $\nabla g_j(\varphi({\bm{x}}^*)) \ne 0,$ then
    $
    \lim_{k \to \infty} 
    \mathcal{V}({\bm{x}}^{(k)}) = 0.
    $
\end{enumerate}
\end{theorem}

We now establish the asymptotic convergence guarantees of the integer feasibility projection. 

\begin{proof}
{Proof of ~\Cref{thm:relu_exterior_convergence}.} We prove~\Cref{thm:relu_exterior_convergence} in the following steps.

\paragraph{L-smoothness of $\nabla \mathcal{V}$ on compact subsets of $\mathcal{D}$.}
Define the violation region as
$
\mathcal{D} := \{{\bm x} \in \mathbb{R}^n : \mathcal{V}({\bm x}) > 0\},
$
which consists of all points violating at least one constraint. 
The penalty function is
$$
\mathcal{V}({\bm x}) = \|{\bm g}(\varphi({\bm x}))_+\|_1 
= \sum_{j=1}^{n_c} \max(0, g_j(\varphi({\bm x}))).
$$
For each ${\bm x} \in \mathcal{D}$, define the active set
$
I_{\bm x} := \{ j : g_j(\varphi({\bm x})) > 0 \}.
$
On this set, $\max(0, g_j(\varphi({\bm x}))) = g_j(\varphi({\bm x}))$, 
which is smooth since both $g_j$ and $\varphi$ are continuously differentiable. 
Thus, we can write
$
\mathcal{V}({\bm x}) = \sum_{j \in I_{\bm x}} g_j(\varphi({\bm x})),
$
and by the chain rule,
$
\nabla \mathcal{V}({\bm x}) 
= \sum_{j \in I_{\bm x}} 
\nabla \varphi({\bm x})^\top \nabla g_j(\varphi({\bm x})).
$

Let ${\bm x}_1, {\bm x}_2 \in \mathcal{D}$ belong to a compact subset $\mathcal{K} \subset \mathcal{D}$, 
and define $I := I_{{\bm x}_1} \cup I_{{\bm x}_2}$.
By the triangle inequality and submultiplicativity of norms,
$$
\begin{aligned}
\|\nabla \mathcal{V}({\bm x}_1) - \nabla \mathcal{V}({\bm x}_2)\|
\le 
\sum_{j \in I}
\|
\nabla \varphi({\bm x}_1)^\top \nabla g_j(\varphi({\bm x}_1))
- \nabla \varphi({\bm x}_2)^\top \nabla g_j(\varphi({\bm x}_2))
\|
\\
\le 
\sum_{j \in I}
\Big(
\|\nabla \varphi({\bm x}_1) - \nabla \varphi({\bm x}_2)\| \, \|\nabla g_j(\varphi({\bm x}_1))\|
 +
\|\nabla \varphi({\bm x}_2)\| \,
\|\nabla g_j(\varphi({\bm x}_1)) - \nabla g_j(\varphi({\bm x}_2))\|
\Big).
\end{aligned}
$$
Applying Assumption~\ref{assump:regularity} gives
$
\|\nabla \varphi({\bm x}_1) - \nabla \varphi({\bm x}_2)\| \le L_\varphi \|{\bm x}_1 - {\bm x}_2\|,  \,\,\,\,
\|\nabla g_j(\varphi({\bm x}_1)) - \nabla g_j(\varphi({\bm x}_2))\| \le L_{\bm g} G_\varphi \|{\bm x}_1 - {\bm x}_2\|, \,\,\,\,
\|\nabla g_j(\varphi({\bm x}))\| \le G_{\bm g}, \,\,\,\, \|\nabla \varphi({\bm x})\| \le G_\varphi.
$
Combining these bounds yields
$$
\|\nabla \mathcal{V}({\bm x}_1) - \nabla \mathcal{V}({\bm x}_2)\|
\le
|I| (G_{\bm g} L_\varphi + G_\varphi L_{\bm g}) \|{\bm x}_1 - {\bm x}_2\|.
$$
Hence, $\nabla \mathcal{V}$ is Lipschitz continuous on compact subsets of $\mathcal{D}$ with local constant 
$L({\bm x}_1, {\bm x}_2) = |I| (G_{\bm g} L_\varphi + G_\varphi L_{\bm g})$. 
When the number of active constraints is uniformly bounded by $\bar{n}_c$, the global Lipschitz constant simplifies to
$
L = \bar{n}_c(G_{\bm g} L_\varphi + G_\varphi L_{\bm g}),
$
which is substantially tighter than the naive worst-case bound $L = n_c(G_{\bm g} L_\varphi + G_\varphi L_{\bm g})$.

\paragraph{Descent lemma and vanishing gradient norm.}
Having established that $\nabla \mathcal{V}$ is $L$-Lipschitz continuous on $\mathcal{D}$,
we now invoke the standard descent lemma for smooth functions. 
It implies that
$$
\mathcal{V}({\bm x}^{(k+1)}) 
\le 
\mathcal{V}({\bm x}^{(k)}) 
- \eta \left(1 - \frac{L\eta}{2}\right) 
\| \nabla \mathcal{V}({\bm x}^{(k)}) \|^2.
$$
If $\eta \in (0, 1/L]$, then $1 - \frac{L\eta}{2} \ge 1/2$, and hence
$
\mathcal{V}({\bm x}^{(k+1)}) 
\le 
\mathcal{V}({\bm x}^{(k)}) 
- \frac{\eta}{2} 
\| \nabla \mathcal{V}({\bm x}^{(k)}) \|^2.
$
Thus, the sequence $\{\mathcal{V}({\bm x}^{(k)})\}$ is monotonically non-increasing 
and bounded below by zero, hence convergent to some finite limit $\mathcal{V}^\star \ge 0$.

Summing from $k = 0$ to $K-1$ gives
$
\sum_{k=0}^{K-1} 
\| \nabla \mathcal{V}({\bm x}^{(k)}) \|^2 
\le 
\frac{2}{\eta}
\left[ 
\mathcal{V}({\bm x}^{(0)}) - \mathcal{V}({\bm x}^{(K)})
\right]
\le 
\frac{2}{\eta} \mathcal{V}({\bm x}^{(0)}).
$

Therefore,
$
\sum_{k=0}^{\infty} 
\| \nabla \mathcal{V}({\bm x}^{(k)}) \|^2 
< \infty 
\quad \Rightarrow \quad 
\lim_{k \to \infty} 
\| \nabla \mathcal{V}({\bm x}^{(k)}) \| = 0.
$

\paragraph{Convergence to feasibility.}
Assume for contradiction that $\mathcal{V}^\star > 0$. 
Then there exists a convergent subsequence $\{{\bm x}^{(k_j)}\}$ 
with limit ${\bm x}^* \in \mathcal{D}$ such that 
$\mathcal{V}({\bm x}^*) > 0$ and $\nabla \mathcal{V}({\bm x}^*) = 0$.
Since the ReLU penalty is smooth wherever $g_j(\varphi({\bm x}^*)) > 0$, 
the composite gradient at ${\bm x}^*$ is
$$
\nabla \mathcal{V}({\bm x}^*) 
= 
\sum_{j \in I_{{\bm x}^*}} 
\nabla \varphi({\bm x}^*)^\top \nabla g_j(\varphi({\bm x}^*)).
$$
If at least one active constraint $j \in I_{{\bm x}^*}$ satisfies 
$\nabla g_j(\varphi({\bm x}^*)) \ne 0$ and 
$\nabla \varphi({\bm x}^*) \ne 0$, 
then the corresponding inner product is generically nonzero, 
implying $\nabla \mathcal{V}({\bm x}^*) \ne 0$—a contradiction to stationarity.

While theoretical cancellations or orthogonality between 
$\nabla \varphi({\bm x}^*)$ and all $\nabla g_j$ could yield 
$\nabla \mathcal{V}({\bm x}^*) = 0$, such configurations are nongeneric 
and do not correspond to stable local minima of $\mathcal{V}$, 
as discussed in Theorem~\ref{thm:local_minima_h}. 
In particular, points with $\nabla \varphi({\bm x}^*) = 0$ 
arise from flat regions of the rounding map 
and are ruled out as attractors by the Łojasiewicz descent framework 
in Theorem~\ref{thm:loja_convergence}.

Therefore, no infeasible critical point can be a stable limit point 
of the projected gradient iterates. 
Consequently, the sequence converges to the feasible boundary:
$
\lim_{k \to \infty} \mathcal{V}({\bm x}^{(k)}) = 0.
$  \hfill  $\square$
\end{proof}

\subsection{Non-Asymptotic Convergence of Integer Feasibility Projection}

Building upon the asymptotic analysis, we now establish a finite-time convergence rate for the integer feasibility projection. This result characterizes how fast the constraint violation $\mathcal{V}(\bm{x})$ decreases under gradient descent with a fixed step size.

\begin{theorem}[Non-Asymptotic Convergence of Integer Feasibility Projection]
\label{thm:non_asymptotic_relu}
Under the assumptions of \Cref{thm:relu_exterior_convergence}, suppose gradient descent is applied to the function
$ \mathcal{V}({\bm{x}}) = \sum_{j=1}^{n_c} \max(0, g_j(\varphi({\bm{x}}))),
$
with fixed step size $\eta \in \left(0, \frac{1}{L} \right]$, where
$ L := \bar{n}_c  (G_{\bm g} L_\varphi + G_\varphi L_{\bm g})$
is an upper bound on the Lipschitz constant of $\nabla \mathcal{V}$ over the region $\mathcal{D} := \{ x : \mathcal{V}({\bm{x}}) > 0 \}$.
Then for any number of iterations $K \ge 1$, the minimum gradient norm over the first $K$ iterates satisfies
$ 
\min_{0 \le k < K} \| \nabla \mathcal{V}({\bm{x}}^{(k)}) \|^2 \le \frac{2}{\eta K} \left[ \mathcal{V}({\bm{x}}^{(0)}) - \mathcal{V}^\star \right],
$
where $\mathcal{V}^\star := \inf_{x \in \mathcal{D}} \mathcal{V}({\bm{x}}) \ge 0$.
In particular, to ensure
$\min_{0 \le k < K} \| \nabla \mathcal{V}({\bm{x}}^{(k)}) \| \le \delta $,
it suffices to run
$
K \ge \frac{2}{\eta \delta^2} (\mathcal{V}({\bm{x}}^{(0)}) - \mathcal{V}^\star)$
iterations with complexity $K = \mathcal{O} \left( \frac{1}{\delta^2} \right)$.
Furthermore, if $\mathcal{V}^\star = 0$, then for any  $\epsilon > 0$ this implies approximate feasibility
$ \mathcal{V}({\bm{x}}^{(k)}) < \epsilon
\quad \text{for all } k \ge K_\epsilon \text{ for some } K_\epsilon$.
\end{theorem}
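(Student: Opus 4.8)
The plan is to recognize this as the textbook $\mathcal{O}(\delta^{-2})$ convergence guarantee for gradient descent on an $L$-smooth (but nonconvex) objective, specialized to $\mathcal{V}$, and to import wholesale the $L$-smoothness already established in \Cref{thm:relu_exterior_convergence}(i). First I would invoke that part: on $\mathcal{D}$ — which is where the update actually lives, by the standing assumption that all iterates $\{{\bf x}^{(k)}\}$ stay in $\mathcal{D}$ — the map $\mathcal{V}$ is $C^1$ with $\nabla\mathcal{V}$ Lipschitz of modulus at most $L = \bar{n}_c(G_{\bf g}L_\varphi + G_\varphi L_{\bf g})$. Applying the quadratic upper (descent) bound along the step ${\bf x}^{(k+1)} = {\bf x}^{(k)} - \eta\nabla\mathcal{V}({\bf x}^{(k)})$ gives $\mathcal{V}({\bf x}^{(k+1)}) \le \mathcal{V}({\bf x}^{(k)}) - \eta\bigl(1 - \tfrac{L\eta}{2}\bigr)\|\nabla\mathcal{V}({\bf x}^{(k)})\|^2$, and since $\eta \le 1/L$ forces $1 - \tfrac{L\eta}{2} \ge \tfrac12$, this reduces to $\mathcal{V}({\bf x}^{(k+1)}) \le \mathcal{V}({\bf x}^{(k)}) - \tfrac{\eta}{2}\|\nabla\mathcal{V}({\bf x}^{(k)})\|^2$; in particular $\{\mathcal{V}({\bf x}^{(k)})\}$ is non-increasing, consistent with \Cref{thm:relu_exterior_convergence}(ii).

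Next I would rearrange the per-step inequality into $\tfrac{\eta}{2}\|\nabla\mathcal{V}({\bf x}^{(k)})\|^2 \le \mathcal{V}({\bf x}^{(k)}) - \mathcal{V}({\bf x}^{(k+1)})$, sum it over $k = 0,\dots,K-1$ so that the right-hand side telescopes to $\mathcal{V}({\bf x}^{(0)}) - \mathcal{V}({\bf x}^{(K)})$, and then drop the last term via $\mathcal{V}({\bf x}^{(K)}) \ge \mathcal{V}^\star := \inf_{{\bf x}\in\mathcal{D}}\mathcal{V}({\bf x})$, which is legitimate precisely because ${\bf x}^{(K)}\in\mathcal{D}$. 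This yields $\sum_{k<K}\|\nabla\mathcal{V}({\bf x}^{(k)})\|^2 \le \tfrac{2}{\eta}\bigl(\mathcal{V}({\bf x}^{(0)}) - \mathcal{V}^\star\bigr)$; dividing by $K$ and using that the minimum of $K$ nonnegative numbers is at most their mean gives exactly the stated bound $\min_{0\le k<K}\|\nabla\mathcal{V}({\bf x}^{(k)})\|^2 \le \tfrac{2}{\eta K}\bigl(\mathcal{V}({\bf x}^{(0)})-\mathcal{V}^\star\bigr)$.

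The remaining assertions are then bookkeeping. Forcing the right-hand side below $\delta^2$ gives the sufficient count $K \ge \tfrac{2}{\eta\delta^2}\bigl(\mathcal{V}({\bf x}^{(0)})-\mathcal{V}^\star\bigr)$, i.e. $K = \mathcal{O}(\delta^{-2})$ for fixed $\eta$ (and $K = \mathcal{O}(L/\delta^2)$ at the maximal step $\eta = 1/L$). For the final approximate-feasibility claim, if $\mathcal{V}^\star = 0$ then the non-increasing sequence $\{\mathcal{V}({\bf x}^{(k)})\}$ is bounded below by $0$ and hence converges; combining with \Cref{thm:relu_exterior_convergence}(ii) its limit equals $0$, so for every $\epsilon>0$ the definition of the limit yields a threshold $K_\epsilon$ with $\mathcal{V}({\bf x}^{(k)})<\epsilon$ for all $k\ge K_\epsilon$, and monotonicity keeps it below $\epsilon$ thereafter. (If one insists that $\mathcal{V}^\star$ denote the domain infimum rather than the limit of the iterates, the equality of the two is exactly what the convergence-to-feasibility condition of \Cref{thm:relu_exterior_convergence}(iii) supplies.)

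The one genuinely delicate point — and where I expect to spend real effort — is the descent step itself: $\mathcal{V}=\sum_j (g_j\circ\varphi)_+$ is only piecewise $C^1$ and $\mathcal{D}$ is in general non-convex, so the quadratic upper bound cannot simply be quoted; it has to be applied along the specific segment $[{\bf x}^{(k)},{\bf x}^{(k+1)}]$, which forces one to check that this segment remains inside the region where \Cref{thm:relu_exterior_convergence}(i) furnishes the $L$-Lipschitz gradient. This is where I would lean on the assumption that the iterates stay in $\mathcal{D}$ together with the fact that the algorithm terminates as soon as an iterate reaches $\{\mathcal{V}=0\}$, and, if a fully rigorous argument is wanted, restrict attention to the sublevel set $\{\mathcal{V}\le\mathcal{V}({\bf x}^{(0)})\}\cap\mathcal{D}$ and use a compactness argument there. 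Everything past that reduces to the routine telescoping bound above.
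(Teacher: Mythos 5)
Your proposal is correct and follows essentially the same route as the paper's own proof: the descent lemma from the $L$-smoothness established in \Cref{thm:relu_exterior_convergence}(i), the telescoping sum bounded below by $\mathcal{V}^\star$, the min-versus-mean step, and the limit argument for the $\mathcal{V}^\star=0$ case. The one point where you go beyond the paper — flagging that the quadratic upper bound must hold along the full segment $[{\bf x}^{(k)},{\bf x}^{(k+1)}]$, not just at the iterates themselves — is a genuine subtlety that the paper's proof silently elides, so your proposed fix via the sublevel-set/compactness restriction is a welcome addition rather than a deviation.
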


\begin{proof}
{Proof of ~\Cref{thm:non_asymptotic_relu}.} Since $\mathcal{V}$ is differentiable with $L$-Lipschitz gradient and gradient descent stepsize is $\eta \le 1/L$, the standard descent lemma implies
$
\mathcal{V}(\bm{x}^{(k+1)}) 
\le 
\mathcal{V}(\bm{x}^{(k)}) 
- \frac{\eta}{2} \| \nabla \mathcal{V}(\bm{x}^{(k)}) \|^2.
$

Summing from $k = 0$ to $K - 1$ gives
$
\mathcal{V}(\bm{x}^{(0)}) - \mathcal{V}(\bm{x}^{(K)}) 
\ge 
\frac{\eta}{2} \sum_{k=0}^{K-1} \| \nabla \mathcal{V}(\bm{x}^{(k)}) \|^2.
$

Since $\mathcal{V}(\bm{x}^{(K)}) \ge \mathcal{V}^\star$, we obtain
$
\sum_{k=0}^{K-1} \| \nabla \mathcal{V}(\bm{x}^{(k)}) \|^2 
\le 
\frac{2}{\eta} \left[ \mathcal{V}(\bm{x}^{(0)}) - \mathcal{V}^\star \right].
$

Dividing both sides by $K$ yields
$
\min_{0 \le k < K} \| \nabla \mathcal{V}(\bm{x}^{(k)}) \|^2 
\le 
\frac{2}{\eta K} \left[ \mathcal{V}(\bm{x}^{(0)}) - \mathcal{V}^\star \right].
$

Finally, if $\mathcal{V}^\star = 0$, then $\mathcal{V}(\bm{x}^{(k)}) \to 0$, implying that for any $\epsilon > 0$, there exists $K_\epsilon$ such that $\mathcal{V}(\bm{x}^{(k)}) < \epsilon$ for all $k \ge K_\epsilon$.
\hfill  $\square$
\end{proof}

\begin{remark}
Theorem~\ref{thm:non_asymptotic_relu} establishes convergence to an approximate first-order stationary point of the composite constraint violation function
$
\mathcal{V}(\bm{x}) = \sum_j \max(0, g_j(\varphi(\bm{x})))
$,
at rate $\mathcal{O}(1/K)$, under standard smoothness and boundedness assumptions.
This ensures that the projected gradient method converges to a critical point of this nonconvex function, although the corresponding hard-rounded point $\varphi(\bm{x})$ may not yet be strictly feasible.
However, Theorem~\ref{thm:local_minima_h} shows that infeasible stationary points are nongeneric and cannot correspond to local minima. 
Taken together, Theorems~\ref{thm:non_asymptotic_relu} and~\ref{thm:local_minima_h} guarantee convergence toward approximately feasible integer solutions in practice.
\end{remark}

\begin{corollary}[Iteration Complexity for Approximate Feasibility]
\label{cor:approx_feasibility}
Suppose the conditions of Theorem~\ref{thm:non_asymptotic_relu} hold and that $\mathcal{V}^\star = 0$.
Then for any tolerance $\epsilon > 0$, gradient descent with $\eta \in (0, 1/L]$ produces an iterate $\bm{x}^{(k)}$ satisfying
$
\mathcal{V}(\bm{x}^{(k)}) < \epsilon
$,
after at most
$
K_\epsilon := \left\lceil \frac{2}{\eta \epsilon} \mathcal{V}(\bm{x}^{(0)}) \right\rceil
$
iterations. Hence,
$
\bm{x}^{(k)} \in S_\epsilon := \left\{ \bm{x} \in \mathbb{R}^n : \mathcal{V}(\bm{x}) < \epsilon \right\},
\, \forall k \ge K_\epsilon.
$
\end{corollary}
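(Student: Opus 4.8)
The plan is to combine the per-iteration descent inequality already established in the proof of \Cref{thm:non_asymptotic_relu} with the monotonicity of the sequence $\{\mathcal{V}(\mathbf{x}^{(k)})\}$. Recall that for $\eta \in (0, 1/L]$ the descent lemma gives $\mathcal{V}(\mathbf{x}^{(k+1)}) \le \mathcal{V}(\mathbf{x}^{(k)}) - \tfrac{\eta}{2}\|\nabla\mathcal{V}(\mathbf{x}^{(k)})\|^2$, so the iterates are non-increasing. The heart of the argument is a uniform per-step decrease: as long as $\mathcal{V}(\mathbf{x}^{(k)}) \ge \epsilon$, a single step should satisfy $\mathcal{V}(\mathbf{x}^{(k+1)}) \le \mathcal{V}(\mathbf{x}^{(k)}) - \tfrac{\eta\epsilon}{2}$. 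Granting this, I would split on a dichotomy: either some iterate with index $< K_\epsilon$ already has $\mathcal{V}(\mathbf{x}^{(k)}) < \epsilon$ — in which case monotonicity gives $\mathcal{V}(\mathbf{x}^{(k')}) < \epsilon$ for all $k' \ge k$ — or every iterate with index $< K_\epsilon$ has $\mathcal{V} \ge \epsilon$, and telescoping the decrement over $K_\epsilon$ steps yields $\mathcal{V}(\mathbf{x}^{(K_\epsilon)}) \le \mathcal{V}(\mathbf{x}^{(0)}) - K_\epsilon\tfrac{\eta\epsilon}{2} \le 0$ whenever $K_\epsilon = \lceil \tfrac{2}{\eta\epsilon}\mathcal{V}(\mathbf{x}^{(0)})\rceil$, forcing $\mathcal{V}(\mathbf{x}^{(K_\epsilon)}) = 0 < \epsilon$. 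Either way $\mathcal{V}(\mathbf{x}^{(k)}) < \epsilon$ for all $k \ge K_\epsilon$, and the ceiling merely rounds the threshold to an integer.

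Everything above is routine telescoping; the one substantive ingredient is the uniform decrement, which is equivalent to the lower bound $\|\nabla\mathcal{V}(\mathbf{x})\|^2 \ge \epsilon$ on the set $\{\mathbf{x} : \mathcal{V}(\mathbf{x}) \ge \epsilon\}$. I would derive this from the structural hypotheses already in force: on $\mathcal{D}$ the penalty coincides with the smooth sum $\sum_{j \in I_{\mathbf{x}}} g_j(\varphi(\mathbf{x}))$ with $\nabla\mathcal{V}(\mathbf{x}) = \sum_{j \in I_{\mathbf{x}}} \nabla\varphi(\mathbf{x})^\top \nabla g_j(\varphi(\mathbf{x}))$, and the non-degeneracy condition of \Cref{thm:relu_exterior_convergence}(iii) keeps $\nabla\mathcal{V}$ from vanishing while $\mathcal{V} > 0$. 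Quantitatively this is a Łojasiewicz/error-bound inequality of the form $\|\nabla\mathcal{V}(\mathbf{x})\|^2 \ge \mathcal{V}(\mathbf{x})$ on $\mathcal{D}$ (using $\mathcal{V}^\star = 0$), so that $\mathcal{V}(\mathbf{x}) \ge \epsilon$ implies $\|\nabla\mathcal{V}(\mathbf{x})\|^2 \ge \epsilon$ and hence the claimed decrement via the descent lemma. For affine constraints the situation is even simpler, since $\|\nabla\mathcal{V}\|$ is then bounded below by a positive constant on all of $\mathcal{D}$, which actually yields exact feasibility after finitely many steps — stronger than the stated $\mathcal{O}(1/\epsilon)$ rate.

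The \textbf{main obstacle} is thus justifying the gradient-dominance bound with a usable constant: cancellation among the active terms $\nabla\varphi(\mathbf{x})^\top \nabla g_j(\varphi(\mathbf{x}))$ could in principle shrink $\|\nabla\mathcal{V}(\mathbf{x})\|$ even when $\mathcal{V}(\mathbf{x})$ is bounded away from zero, so one must either appeal to the Łojasiewicz inequality for the (definable) penalty $\mathcal{V}$, as in the companion analysis built around \Cref{thm:loja_convergence}, or impose a constraint-qualification-style condition on the relevant sublevel set. Once that bound is secured, the corollary follows immediately from the telescoping dichotomy above; a more conservative reading simply instantiates \Cref{thm:non_asymptotic_relu} with $\delta^2 = \epsilon$ to obtain an index $k < K_\epsilon$ with $\|\nabla\mathcal{V}(\mathbf{x}^{(k)})\|^2 \le \epsilon$, after which converting this to $\mathcal{V}(\mathbf{x}^{(k)}) < \epsilon$ still requires exactly the same error-bound step.
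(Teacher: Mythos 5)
Your telescoping framework is the same one the paper uses: sum the descent inequality $\mathcal{V}(\mathbf{x}^{(k+1)}) \le \mathcal{V}(\mathbf{x}^{(k)}) - \tfrac{\eta}{2}\|\nabla\mathcal{V}(\mathbf{x}^{(k)})\|^2$ and argue that $K_\epsilon = \lceil \tfrac{2}{\eta\epsilon}\mathcal{V}(\mathbf{x}^{(0)})\rceil$ steps exhaust the initial gap. You are also right that the entire argument hinges on a uniform per-step decrement of order $\tfrac{\eta\epsilon}{2}$ while $\mathcal{V}(\mathbf{x}^{(k)}) \ge \epsilon$, which is equivalent to the lower bound $\|\nabla\mathcal{V}(\mathbf{x})\|^2 \ge \epsilon$ on the superlevel set $\{\mathbf{x} : \mathcal{V}(\mathbf{x}) \ge \epsilon\}$, i.e.\ a gradient-dominance/error-bound condition of the form $\|\nabla\mathcal{V}\|^2 \ge \mathcal{V}$. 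Neither the corollary's hypotheses nor those of \Cref{thm:non_asymptotic_relu} supply this: the structural condition in \Cref{thm:relu_exterior_convergence}(iii) only gives $\nabla\mathcal{V} \ne 0$ on $\mathcal{D}$ with no quantitative lower bound, and, as you note, the active terms $\nabla\varphi(\mathbf{x})^\top\nabla g_j(\varphi(\mathbf{x}))$ can nearly cancel. So your proof, as written, is incomplete at exactly the point you flag, and the Łojasiewicz machinery of \Cref{thm:loja_convergence} does not rescue it either, since a Łojasiewicz exponent $\theta$ close to $1$ would give a decrement too weak for the stated $\mathcal{O}(1/\epsilon)$ count.

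What you should know is that the paper's own proof has the same gap and is less candid about it. After establishing $\mathcal{V}(\mathbf{x}^{(0)}) - \mathcal{V}(\mathbf{x}^{(K)}) \ge \tfrac{\eta}{2}\,K\cdot\min_{0\le k<K}\|\nabla\mathcal{V}(\mathbf{x}^{(k)})\|^2$, the paper asserts that ``a sufficient condition is to ensure $\mathcal{V}(\mathbf{x}^{(K)}) < \epsilon$ whenever $K \ge \tfrac{2}{\eta\epsilon}\mathcal{V}(\mathbf{x}^{(0)})$,'' which restates the conclusion rather than deriving it; making that step rigorous requires precisely the bound $\min_k\|\nabla\mathcal{V}(\mathbf{x}^{(k)})\|^2 \ge \epsilon$ that you identify as the missing ingredient. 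Your fallback of instantiating \Cref{thm:non_asymptotic_relu} with $\delta^2 = \epsilon$ is also correctly diagnosed as insufficient: it yields a small gradient norm at some iterate, not a small function value, and the conversion again needs the same error bound. In short, your dissection of the argument is accurate, but neither you nor the paper proves the corollary from the stated hypotheses; a sound version would either add the assumption $\|\nabla\mathcal{V}(\mathbf{x})\|^2 \ge c\,\mathcal{V}(\mathbf{x})$ on $\mathcal{D}$ explicitly (which in fact yields the stronger linear rate $\mathcal{V}(\mathbf{x}^{(k)}) \le (1-\tfrac{\eta c}{2})^k \mathcal{V}(\mathbf{x}^{(0)})$), or weaken the conclusion to the gradient-norm guarantee already provided by \Cref{thm:non_asymptotic_relu}.
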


\begin{remark}[On Regularity and Practical Tightness of Lipschitz Assumptions]
\label{rmk:regularity_lipschitz}
The analyses in Theorems~\ref{thm:relu_exterior_convergence} and~\ref{thm:non_asymptotic_relu} rely on the assumption that both $\bm{g}$ and $\varphi$ are $C^1$ with Lipschitz continuous gradients and bounded Jacobians. These conditions are realistic in practice.
The ReLU penalty is smooth on the exterior domain $\mathcal{D} = \{\bm{x} : \mathcal{V}(\bm{x}) > 0\}$ and thus preserves the differentiability of $\mathcal{V}$.
    The constraint map $\bm{g}: \mathbb{R}^n \to \mathbb{R}^{n_c}$ typically consists of smooth nonlinear functions defined over compact domains such as $[0,1]^n$. Hence, $\bm{g} \in C^2$, and its Jacobian $\nabla \bm{g}$ is Lipschitz continuous and bounded. 
    The surrogate rounding $\varphi$ is parameterized by smooth functions (scaled sigmoid or Gumbel-sigmoid) with finite, strictly positive temperature parameters. In this case, $\varphi \in C^2$ with bounded and Lipschitz continuous gradients.
The bound 
$
L := \bar{n}_c (G_{\bm{g}} L_\varphi + G_\varphi L_{\bm{g}})
$
is generally conservative. 
In many practical cases, $\bm{g}(\varphi(\bm{x}))$ is sparse or low-rank, and each $g_j$ depends only on a small subset of variables, yielding significantly smaller Lipschitz constants. 
Exploiting such sparsity can lead to tighter complexity bounds and faster empirical convergence.
\end{remark}

\subsection{Extended Convergence Analysis}

Because the penalty $\mathcal{V}(\bm{x})$ is subanalytic and satisfies the Kurdyka--Łojasiewicz (KŁ) inequality~\citep{bolte2007_clarke}, the basic descent argument extends naturally to nonconvex settings: the iterates $\{\bm{x}^{(k)}\}$ produced by the feasibility projection converge to a stationary point, and $\mathcal{V}(\bm{x}^{(k)}) \to mathcal{V}^\star$. Appendix~\ref{subapp:proof_Łoj} provides the corresponding technical derivation.

Under mild regularity conditions, the penalty landscape admits no persistent infeasible local minima; infeasible critical points are either ruled out by non-vanishing constraint gradients or are dynamically unstable. This ensures that the projection mechanism continues to move iterates toward feasibility (see Appendix~\ref{subapp:proof_local}).

\begin{remark}[Robustness]
Appendix~\ref{subapp:degen} further shows that degeneracies such as plateaus, non-isolated critical points, and flat manifolds are nongeneric and do not attract gradient-descent trajectories, explaining the stable empirical behavior of the projection step.
\end{remark}

These guarantees are local rather than global: Proving global feasibility for arbitrary MINLPs is intractable. Our analysis focuses on the local convergence behavior of the integer feasibility projection under standard smoothness and regularity assumptions. In practice, however, the correction layer typically produces initial points near the feasible region, and we observe trajectories consistent with the above theory.

\section{Benchmark Problems}

We evaluate our framework on three representative families of mixed-integer optimization problems: integer quadratic problems (IQPs), integer nonconvex problems (INPs), and mixed-integer Rosenbrock problems (MIRBs). These benchmarks span diverse structures and nonlinearities, and problem sizes from tens to tens of thousands of variables.

\paragraph{Integer Quadratic Problems.} 
The IQPs are formulated as
$$
\min_{{\bm{x}} \in \mathbb{Z}^n}
\;\frac{1}{2}\bm{x}^\top \bm{Q} \bm{x} + \bm{p}^\top \bm{x}
\quad\text{s.t.}\quad
\bm{A}\bm{x} \le \bm{b},
$$

Following \citet{donti2021dc3}, we adopt a similar data-generation scheme. Complete data-generation details are provided in Appendix~\ref{subapp:qp_gen}.

\paragraph{Integer Non-convex Problems.} 
Following \citet{donti2021dc3}, INPs extend the IQP formulation:
$$
\min_{{\bm{x}} \in \mathbb{Z}^n} \ \frac{1}{2} {\bm{x}}^{\intercal} {\bm Q} {\bm{x}} + {\bm p}^{\intercal} \sin{({\bm{x}})}
\quad \text{subject to} \quad {\bm A} {\bm{x}} \leq {\bm b},
$$

Compared to IQPs, INPs exhibit nonconvex objective landscapes and instance-dependent feasible regions.
The full instance construction is described in Appendix~\ref{subapp:nc_gen}.

\paragraph{Mixed-integer Rosenbrock Problems.}
The mixed-integer Rosenbrock problems (MIRBs) are defined as:
$$
\min_{{\bm x} \in \mathbb{R}^n, {\bm y} \in \mathbb{Z}^n} 
\ \|{\bm a} - {\bm x}\|_2^2 + 50 \| {\bm y} - {\bm x}^2 \|_2^2 \quad 
\text{subject to} \quad
\| {\bm{x}} \|_2^2 \leq n b, \quad
\bm{1}^\intercal {\bm y} \geq \frac{n b}{2}, \quad
{\bm p}^\intercal {\bm x} \leq 0, \quad
{\bm Q}^\intercal {\bm y} \leq 0,
$$

Here, $(\bm{x},\bm{y})$ denote continuous and integer variables, respectively, while problem parameters vary across instances. Further details are given in Appendix~\ref{subapp:rb_gen}.

\section{Experimental Results}

We evaluate the proposed framework on a range of mixed-integer optimization problems, assessing (i) the effectiveness of the correction layers, (ii) the impact of feasibility projection, and (iii) performance relative to classical heuristics and exact solvers.

\subsection{Experimental Setup}
\label{subsec:setup}

\paragraph{Implementation and Training.}
All learning-based models are trained on 8{,}000 instances, validated on 1{,}000, 
and tested on 100 unseen samples. Training is performed using PyTorch and 
NeuroMANCER~\citep{Neuromancer2023} on a GPU workstation. Solver-based baselines use \gurobi{} for convex 
problems and \scip{}+\ipopt{} for nonconvex ones under identical CPU settings. Each learning-based method employs a feedforward architecture consisting of a 
solution-mapping module followed by an integer correction module. Hidden widths 
and architectural choices scale with the problem dimension, and differ across 
datasets. 
Full hyperparameters, network specifications, solver configuration, 
software environments, and hardware specifications are provided in Appendix~\ref{app:exp_details}.

\paragraph{Methods.}
\Cref{tab:methods} summarizes all methods evaluated in our experiments. A uniform time limit of 1000 seconds is applied to every solver and method. We assess two learning-based approaches, RC and LT, which directly predict integer solutions through differentiable correction layers. Their enhanced counterparts, RC-P and LT-P, further incorporate the integer feasibility projection to improve constraint satisfaction. 
We include a range of baselines: (i) \textit{Exact solvers (EX):} Implemented with \gurobi{} for convex problems and \scip{} for nonconvex problems. These methods guarantee optimality when tractable but incur high computational cost. (ii) Heuristic baselines: \textit{Rounding after Relaxation (RR)} directly rounds the continuous relaxation, while \textit{Root Node Solution (N1)} extracts the first feasible solution from solver. Both EX and N1 implicitly rely on solver-integrated heuristics (e.g., presolve, primal heuristics, cut management), so our comparison implicitly reflects the effectiveness of these built-in heuristics versus our learning-based alternatives.
Overall, these methods span a broad spectrum from exact solvers to data-driven and heuristic strategies, enabling a comprehensive evaluation across feasibility, solution quality, and runtime. Our implementation is publicly available at \href{https://github.com/pnnl/L2O-pMINLP}{\texttt{github.com/pnnl/L2O-pMINLP}}.

\begin{table*}[htb!]
\centering
\footnotesize
\setlength{\tabcolsep}{6pt}
\renewcommand{\arraystretch}{0.85}
\caption{Summary of methods (``*'' indicates a trained model).}
\label{tab:methods}

\begin{tabular}{>{\raggedright\arraybackslash}m{4.2cm} > {\raggedright\arraybackslash}m{6cm}}
\toprule
\textbf{Method} & \textbf{Description} \\
\midrule
Rounding Classification (RC)* 
& Learns rounding directions by predicting probabilities for rounding up or down.\\
\midrule

RC with Projection (RC-P)* 
& Adds an integer-feasibility projection at inference time for RC.\\
\midrule

Learnable Threshold (LT)* 
& Predicts instance-dependent thresholds determining rounding directions.\\
\midrule

LT with Projection (LT-P)* 
& Adds an integer-feasibility projection at inference time for  LT.\\
\midrule

Exact Solver (EX) 
& Solves each instance using commercial solvers such as \gurobi{} or \scip{}.\\
\midrule

Rounding after Relaxation (RR)
& Solves the continuous relaxation and rounds integer variables.\\
\midrule

Root Node Solution (N1)
& Uses the first feasible integer solution found at the root node of branch-and-cut.\\


\bottomrule
\end{tabular}
\end{table*}


\subsection{Qualitative Behavior}
\label{subsec:qual}

\paragraph{Solver-Time Comparison.}
As illustrated in~\Cref{fig:solving}, exact solvers such as \gurobi{} find better solutions over time but can be slow. For more complex problem instances, these solvers may fail to find feasible solutions within strict time limits. In contrast, our proposed methods consistently achieve high-quality feasible solutions within milliseconds. To the best of our knowledge, this is the first general approach for efficiently solving parametric MINLPs with up to tens of thousands of variables.

\begin{figure}[htb!]
\vspace{-6pt}
    \centering
    \begin{minipage}{0.44\textwidth}
        \centering        \includegraphics[width=\textwidth]{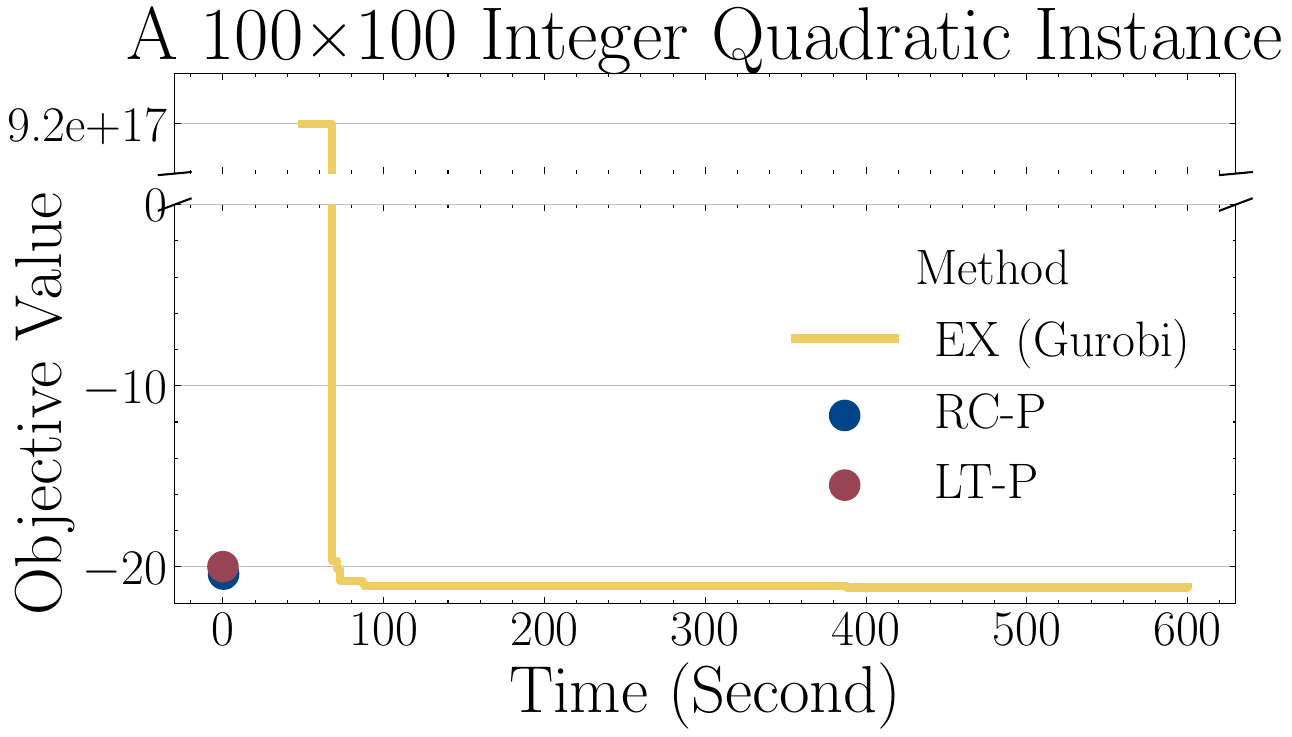}
        \label{fig:qc_sol}        \end{minipage}\hspace{0.05\textwidth}
    \begin{minipage}{0.44\textwidth}
        \centering        \includegraphics[width=\textwidth]{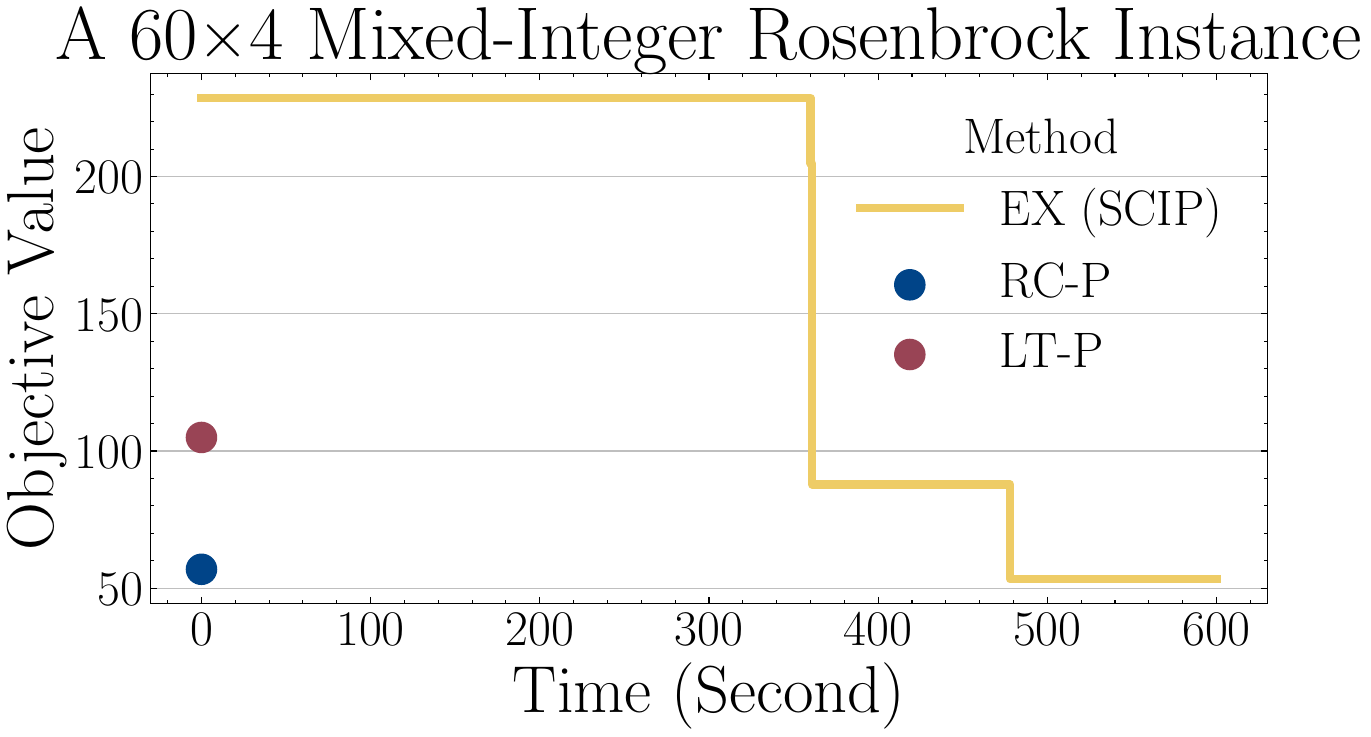}
        \label{fig:rb_sol}
    \end{minipage}    \caption{Illustration of objective value evolution for a $100 \times 100$  Integer Quadratic and $60 \times 4$ Mixed-Integer Rosenbrock over 600 seconds.}
    \label{fig:solving}
\vspace{-6pt}
\end{figure}

Even when accounting for training time (100 seconds), the overall efficiency of RC and LT remains substantially better. Importantly, once trained, the models effectively generalize to unseen problem instances, making them ideal for repeated problem-solving scenarios where the training cost is amortized~\cite{Amos_amortized2022}. Furthermore, RC and LT can generate high-quality initial solutions for exact solvers, reducing the search space and accelerating the convergence of traditional methods.

\paragraph{Case Study: 2D Mixed-Integer Rosenbrock}
To provide intuition for how our framework operates in practice, we first illustrate the interplay between the integer correction and feasibility projection modules on a simple two-dimensional example. Specifically, we consider a Mixed-Integer Rosenbrock Benchmark (MIRB) instance, formulated as:
$$
\min_{x \in \mathbb{R}, y \in \mathbb{Z}} \quad (a - x)^2 + 50(y - x^2)^2 \quad 
\text{subject to} \quad y \geq {b}/{2}, \quad x^2 \leq b, \quad x \leq 0, \quad y \geq 0. 
$$
Here, $x$ is a continuous decision variable and $y$ is an integer decision variable, both subject to linear constraints. The instance parameters $a$ and $b$ serve as input features to the neural network.

\begin{figure}[htbp!]
\vspace{-6pt}
    \centering 
    \begin{minipage}{0.35\textwidth}
    \includegraphics[width=\textwidth]{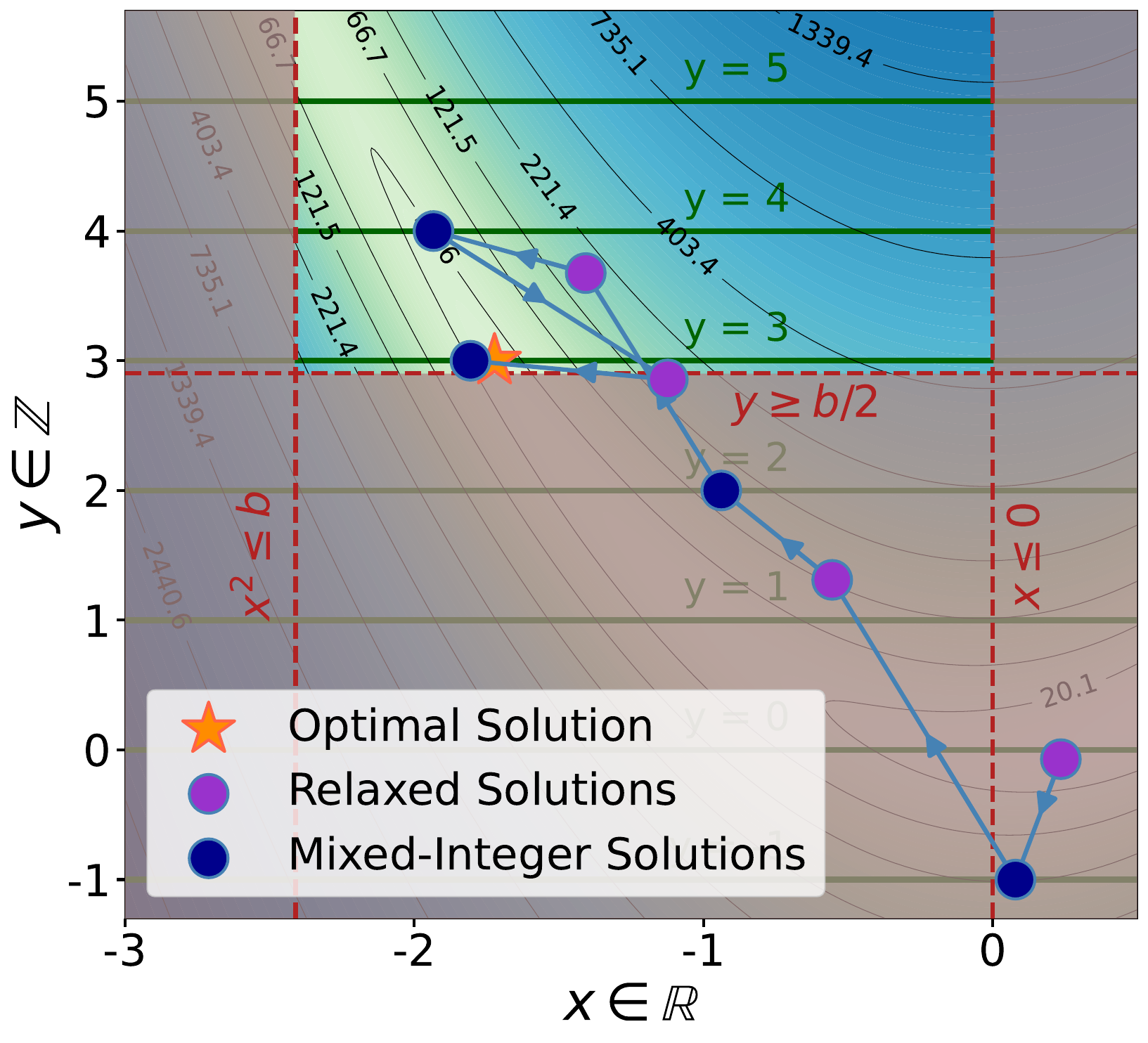}
    \caption{Example of the relaxed solutions $\bar{x}, \bar{y}$ and the mixed-integer solutions $\hat{x}, \hat{y}$ across different epochs of training for the same sample instance.}
    \label{fig:training}
    \end{minipage}
    \hspace{0.02\textwidth}
    \begin{minipage}{0.35\textwidth}
    \vspace{-0.08cm}
    \includegraphics[width=\textwidth]{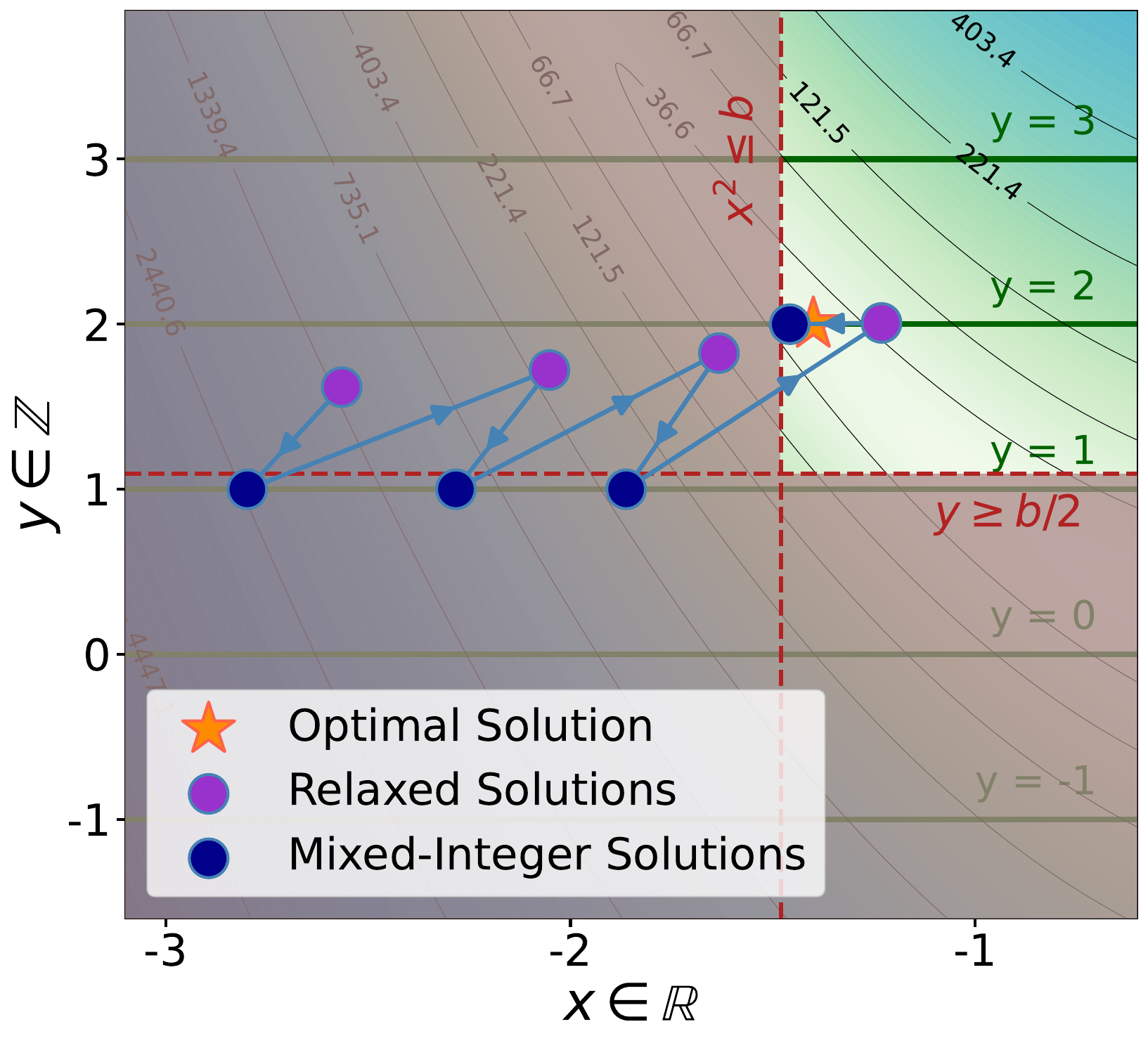}
    \caption{Example of the relaxed solutions $\bar{x}, \bar{y}$ and the mixed-integer solutions $\hat{x}, \hat{y}$ across iterations of feasibility projection to refine an infeasible solution.}
    \label{fig:projection}
    \end{minipage}
\vspace{-12pt}
\end{figure}

\Cref{fig:training} illustrates the evolution of relaxed solutions $(\bar{x}, \bar{y})$ and their corresponding mixed-integer solutions $(\hat{x}, \hat{y})$ over successive training epochs. In this example, the instance parameters are set to $a = 3.83$ and $b = 6.04$. During training, the neural network progressively refines the relaxed solutions $(\bar{x}, \bar{y})$, while the integer correction layer maps them to mixed-integer counterparts $(\hat{x}, \hat{y})$. As learning proceeds, the predictions gradually approach feasible and near-optimal regions, demonstrating that the correction layer effectively enforces integer feasibility without sacrificing objective quality or constraint satisfaction.

\Cref{fig:projection} illustrates the iterative refinement performed by the integer feasibility projection module on an initially infeasible solution. In this example, the instance parameters are $a = 4.16$ and $b = 2.19$. Starting from the relaxed solution $(\bar{x}, \bar{y})$, the projection procedure applies gradient-based updates that progressively reduce constraint violations while preserving the integer feasibility enforced by the correction layer. As the iterations proceed, the solution converges toward the feasible region, and the final mixed-integer output $(\hat{x}, \hat{y})$ satisfies all constraints.

While the main text reports aggregate feasibility statistics, the heatmaps below provide instance-level visualizations of constraint violations for the largest benchmark sizes used in our experiments: $500\times500$ for IQP, $500\times500$ for INP, and $20000\times 4$ for MIRB. These plots illustrate typical violation patterns corrected by RC and LT across 100 test instances for each class.

\subsection{Main Experiments}

We evaluate the proposed methods on a broad range of problem instances. For IQPs and INPs, we consider sizes from $20\times20$ (20 variables and 20 constraints) up to $1000\times1000$, while for MIRBs we experiment with instances containing between $2$ and $20{,}000$ decision variables with the number of constraints fixed at $4$. Results for these three problem classes are summarized in~\Cref{tab:QP_Stat}, \Cref{tab:NC_Stat}, and \Cref{tab:RB_Stat}. For methods that rely on exact solvers (EX, N1, and RR), the solver may fail to return a solution within the $1{,}000$ second time-limit; to quantify this, we report the percentage of instances solved (``\%Solved'').
To assess the generality of our approach beyond nonlinear settings, we additionally evaluate RC and LT on binary linear programs. Consistent performance trends are observed, and detailed experimental results are provided in Appendix~\ref{app:blps}.

\begin{table}[ht!]
\centering
\caption{
\textbf{Results for IQPs.}  
Each problem size is evaluated on a test set of 100 instances. ``Obj Mean'' and ``Obj Med'' represent the mean and median objective values for this minimization problem, with smaller values being better. ``Feasible'' denotes the fraction of feasible solutions, ``Solved'' denotes the percentage of instances where a solution (feasible or infeasible) was found within the time limit, and ``Time'' is the average solving/inference time per instance. ``—'' indicates that no solution was found within 1000 seconds. For methods achieving 100\% feasibility, we highlight in bold the best-performing metrics.}
\label{tab:QP_Stat}
\resizebox{0.8\linewidth}{!}{
\begin{tabular}{ll|rrrrrr}
\toprule[1pt]\midrule[0.3pt]
\textbf{} & \textbf{Metric} & \textbf{20×20} & \textbf{50×50} & \textbf{100×100} & \textbf{200×200} & \textbf{500×500}  & \textbf{1000×1000} \\
\midrule
\multirow{4}{*}{RC}
    & Obj Mean         & $-4.237$  & $-12.20$   & $-13.54$   & $-31.62$   & $-73.31$   & $-142.7$    \\
    & Obj Med       & $-4.307$  & $-12.20$   & $-13.60$   & $-31.71$   & $-73.38$   & $-142.7$    \\
    & Feasible      & $99\%$    & $99\%$      & $96\%$    & $97\%$     & $86\%$     & $82\%$      \\
    & Time       & $0.0019$  & $0.0019$   & $0.0022$   & $0.0021$   & $0.0025$   & $0.0042$    \\
\midrule
\multirow{4}{*}{RC-P} 
    & Obj Mean         & $-4.238$  & $-12.20$   & $-13.54$   & $\bm{-31.62}$   & $\bm{-73.31}$   & $\bm{-142.7}$    \\
    & Obj Med       & $-4.307$  & $-12.20$   & $-13.57$   & $\bm{-31.71}$   & $\bm{-73.38}$   & $\bm{-142.7}$    \\
    & Feasible      & $100\%$   & $100\%$    & $100\%$    & $100\%$    & $100\%$    & $100\%$     \\
    & Time       & $\bm{0.0045}$  & $0.0055$   & $\bm{0.0050}$   & $\bm{0.0050}$   & $0.0065$   & $0.0090$    \\
\midrule
\multirow{4}{*}{LT} 
    & Obj Mean         & $-4.302$  & $-12.98$   & $-13.65$   & $-31.34$   & $-72.36$   & $-142.6$    \\
    & Obj Med       & $-4.319$  & $-13.03$   & $-13.77$   & $-31.61$   & $-72.48$   & $-142.6$    \\
    & Feasible      & $98\%$    & $98\%$     & $93\%$     & $95\%$     & $94\%$     & $100\%$     \\
    & Time       & $0.0020$  & $0.0020$   & $0.0023$   & $0.0022$   & $0.0026$   & $0.0047$    \\
\midrule
\multirow{4}{*}{LT-P} 
    & Obj Mean         & $-4.301$  & $-12.98$   & $-13.65$   & $-31.34$   & $-72.36$   & $-142.6$    \\
    & Obj Med       & $-4.316$  & $-13.03$   & $-13.77$   & $-31.61$   & $-72.48$   & $-142.6$    \\
    & Feasible      & $100\%$   & $100\%$    & $100\%$    & $100\%$    & $100\%$    & $100\%$       \\
    & Time       & $0.0056$  & $\bm{0.0055}$   & $0.0100$   & $0.0064$   & $\bm{0.0063}$   & $\bm{0.0086}$    \\
\midrule
\multirow{4}{*}{EX} 
    & Obj Mean         & $\bm{-5.120}$  & $\bm{-15.93}$  & $\bm{-20.79}$    & —          & —          & —           \\
    & Obj Med       & $\bm{-5.130}$  & $\bm{-15.96}$  & $\bm{-20.78}$    & —          & —          & —           \\
    & Feasible    & $100\%$   & $100\%$   & $100\%$     & —          & —          & —           \\
    & Solved        & $100\%$   & $100\%$    & $100\%$    & $0\%$          & $0\%$           & $0\%$            \\
    & Time       & $8.728$   & $1520$    & $1237$      & —          & —          & —           \\
\midrule
\multirow{4}{*}{RR} 
    & Obj Mean         & $-5.179$  & $-16.17$  & $-21.92$    & $-46.73$   & $-106.5$   & $-213.3$    \\
    & Obj Median       & $-5.217$  & $-16.21$  & $-21.89$    & $-46.76$   & $-106.5$   & $-213.3$    \\
    & Feasible      & $0\%$     & $0\%$      & $0\%$      & $0\%$      & $0\%$      & $0\%$       \\
    & Solved        & $100\%$   & $100\%$    & $100\%$    & $100\%$          & $100\%$          & $100\%$           \\
    & Time       & $0.417$   & $0.440$    & $0.583$    & $0.846$    & $2.639$    & $8.874$     \\
\midrule
\multirow{4}{*}{N1} 
    & Obj Mean         & $9.8e7$   & $1.7e17$   & $1.5e18$   & —          & —          & —           \\
    & Obj Med       & $9.600$   & $2.4e17$   & $1.4e18$   & —          & —          & —           \\
    & Feasible      & $100\%$   & $100\%$    & $100\%$    & —          & —          & —           \\
    & Solved        & $100\%$   & $100\%$    & $100\%$    & $0\%$          & $0\%$           & $0\%$            \\
    & Time       & $0.415$   & $0.498$    & $104.2$    & —          & —          & —           \\
\midrule[0.3pt]\bottomrule[1pt]
\end{tabular}
}
\end{table}

\begin{table}[ht!]
\centering
\caption{
\textbf{Results for INPs.}  
Each problem size is evaluated on a test set of 100 instances. ``Obj Mean'' and ``Obj Med'' represent the mean and median objective values for this minimization problem, with smaller values being better. ``Feasible'' denotes the fraction of feasible solutions, ``Solved'' denotes the percentage of instances where a solution (feasible or infeasible) was found within the time limit, and ``Time'' is the average solving/inference time per instance. ``—'' indicates that no solution was found within 1000 seconds. For methods achieving 100\% feasibility, we highlight in bold the best-performing metrics.}
\label{tab:NC_Stat}
\resizebox{0.8\linewidth}{!}{
\begin{tabular}{ll|rrrrrr}
\toprule[1pt]\midrule[0.3pt]
\textbf{} & \textbf{Metric} & \textbf{20×20} & \textbf{50×50} & \textbf{100×100} & \textbf{200×200} & \textbf{500×500}  & \textbf{1000×1000} \\
\midrule
\multirow{4}{*}{RC} 
    & Obj Mean       & $0.228$  & $0.771$  & $1.664$  & $1.472$  & $0.526$   & $1.422$  \\
    & Obj Med        & $0.217$  & $0.752$  & $1.594$  & $1.436$  & $0.526$   & $0.809$  \\
    & Feasible       & $100\%$  & $98\%$   & $100\%$  & $99\%$   & $96\%$    & $97\%$   \\
    & Time           & $\bm{0.0019}$ & $0.0020$ & $0.0022$ & $0.0022$ & $0.0029$  & $0.0040$ \\
\midrule
\multirow{4}{*}{RC-P} 
    & Obj Mean       & $0.228$  & $0.772$  & $1.664$  & $1.471$  & $0.524$   & $1.423$  \\
    & Obj Median     & $0.217$  & $0.752$  & $1.594$  & $1.436$  & $0.526$   & $0.809$  \\
    & Feasible       & $100\%$  & $100\%$  & $100\%$  & $100\%$  & $100\%$   & $100\%$  \\
    & Time           & $0.0045$ & $0.0058$ & $0.0060$ & $0.0054$ & $\bm{0.0061}$  & $\bm{0.0115}$ \\
\midrule
\multirow{4}{*}{LT} 
    & Obj Mean       & $0.195$  & $0.580$  & $0.669$  & $\bm{-0.356}$ & $-1.374$  & $-3.744$ \\
    & Obj Med        & $0.175$  & $0.566$  & $0.649$  & $\bm{-0.373}$ & $-1.594$  & $-3.716$ \\
    & Feasible       & $99\%$   & $98\%$   & $96\%$   & $100\%$  & $98\%$    & $99\%$   \\
    & Time           & $0.0019$ & $0.0020$ & $0.0021$ & $\bm{0.0023}$ & $0.0029$  & $0.0050$ \\
\midrule
\multirow{4}{*}{LT-P} 
    & Obj Mean       & $0.195$  & $\bm{0.580}$  & $\bm{0.669}$  & $\bm{-0.356}$ & $\bm{-1.374}$  & $\bm{-3.744}$ \\
    & Obj Median     & $0.175$  & $0.566$  & $\bm{0.649}$  & $\bm{-0.373}$ & $\bm{-1.594}$  & $\bm{-3.716}$ \\
    & Feasible       & $100\%$  & $100\%$  & $100\%$  & $100\%$  & $100\%$   & $100\%$  \\
    & Time           & $0.0048$ & $\bm{0.0050}$ & $\bm{0.0058}$ & $0.0056$ & $0.0072$  & $0.0117$ \\
\midrule
\multirow{5}{*}{EX} 
    & Obj Mean       & $\bm{-0.453}$ & $1.649$  & $256.93$ & —        & —         & —        \\
    & Obj Med        & $\bm{-0.463}$ & $\bm{-0.052}$ & $134.62$ & —        & —         & —        \\
    & Feasible       & $100\%$  & $100\%$  & $14\%$   & —        & —         & —        \\
    & Solved         & $100\%$  & $100\%$  & $14\%$   & $0\%$    & $0\%$     & $0\%$    \\
    & Time           & $0.9949$ & $1001$   & $1001$   & —        & —         & —        \\
\midrule
\multirow{5}{*}{RR} 
    & Obj Mean       & $-0.464$ & $-1.039$ & $-2.068$ & $-3.990$ & $-9.391$  & —         \\
    & Obj Med        & $-0.476$ & $-1.215$ & $-2.307$ & $-4.327$ & $-9.221$  & —         \\
    & Feasible       & $3\%$    & $0\%$    & $0\%$    & $0\%$    & $0\%$     & —         \\
    & Solved         & $100\%$  & $100\%$  & $100\%$  & $100\%$  & $100\%$   & $0\%$     \\
    & Time           & $0.996$  & $1.189$  & $4.600$  & $54.01$  & $449.0$   & —         \\
\midrule
\multirow{5}{*}{N1} 
    & Obj Mean       & $2.1e4$  & $3.7e6$  & $4411$   & —        & —         & —         \\
    & Obj Med        & $2.222$  & $45.85$  & $155.2$  & —        & —         & —         \\
    & Feasible       & $100\%$  & $100\%$  & $14\%$   & —        & —         & —         \\
    & Solved         & $100\%$  & $100\%$  & $14\%$   & $0\%$    & $0\%$     & $0\%$     \\
    & Time           & $0.144$  & $8.968$  & $940.4$  & —        & —         & —         \\
\midrule[0.3pt]\bottomrule[1pt]
\end{tabular}
}
\end{table}

\begin{table}[ht!]
\centering
\caption{
\textbf{Results for MIRBs.}
Each problem size is evaluated on a test set of 100 instances.
``Obj Mean'' and ``Obj Med'' represent the mean and median objective values for this minimization problem.
``Feasible'' denotes the fraction of feasible solutions, ``Solved'' denotes the percentage of instances where a solution was found within the time limit, and ``Time'' is the average solving/inference time per instance.
For methods achieving 100\% feasibility, we highlight in bold the best-performing metrics.
}
\label{tab:RB_Stat}
\resizebox{0.65\linewidth}{!}{
\begin{tabular}{ll|rrrrr}
\toprule[1pt]\midrule[0.3pt]
\textbf{} & \textbf{Metric}
& \textbf{2×4} & \textbf{20×4} & \textbf{200×4} & \textbf{2000×4} & \textbf{20000×4} \\
\midrule

\multirow{4}{*}{RC}
    & Obj Mean     & $23.27$  & $\bm{59.39}$  & $503.5$   & $5938$    & $6.7e4$ \\
    & Obj Med      & $21.48$  & $\bm{48.86}$  & $461.7$   & $5792$    & $6.7e4$ \\
    & Feasible     & $97\%$   & $100\%$       & $99\%$    & $99\%$    & $76\%$  \\
    & Time         & $0.0019$ & $\bm{0.0019}$ & $0.0021$  & $0.0033$  & $0.0121$ \\
\midrule

\multirow{4}{*}{RC-P}
    & Obj Mean     & $23.50$  & $\bm{59.39}$ & $\bm{504.2}$ & $5942$    & $9.8e4$ \\
    & Obj Med      & $21.48$  & $\bm{48.86}$ & $\bm{461.7}$ & $5792$    & $7.3e4$ \\
    & Feasible     & $100\%$  & $100\%$      & $100\%$      & $100\%$   & $100\%$ \\
    & Time         & $\bm{0.0062}$ & $0.0048$ & $\bm{0.0052}$ & $\bm{0.0070}$ & $0.0824$ \\
\midrule

\multirow{4}{*}{LT}
    & Obj Mean     & $23.18$  & $62.51$    & $622.8$   & $5612$    & $4.8e4$ \\
    & Obj Med      & $20.80$  & $63.40$    & $626.0$   & $5558$    & $3.5e4$ \\
    & Feasible     & $98\%$   & $100\%$    & $100\%$   & $97\%$    & $66\%$  \\
    & Time         & $0.0019$ & $0.0020$   & $0.0026$  & $0.0030$  & $0.0127$ \\
\midrule

\multirow{4}{*}{LT-P}
    & Obj Mean     & $23.33$  & $62.51$    & $622.8$   & $\bm{5615}$    & $\bm{8.0e4}$ \\
    & Obj Med      & $20.80$  & $63.40$    & $626.0$   & $\bm{5558}$    & $\bm{4.5e4}$ \\
    & Feasible     & $100\%$  & $100\%$    & $100\%$   & $100\%$   & $100\%$ \\
    & Time         & $\bm{0.0062}$ & $0.0055$ & $0.0062$  & $0.0071$ & $\bm{0.0639}$ \\
\midrule

\multirow{5}{*}{EX}
    & Obj Mean     & $\bm{19.62}$ & $64.67$  & $8.4e5$  & $4.7e10$ & $1.1e15$ \\
    & Obj Med      & $\bm{18.20}$ & $59.16$  & $908.8$  & $9262$   & $1.0e5$ \\
    & Feasible     & $100\%$       & $100\%$  & $100\%$  & $96\%$   & $78\%$ \\
    & Solved       & $100\%$       & $100\%$  & $100\%$  & $96\%$   & $78\%$ \\
    & Time         & $3.5090$      & $1005$   & $1002$   & $1002$   & $1040$ \\
\midrule

\multirow{5}{*}{RR}
    & Obj Mean     & $22.24$  & $1.2e4$  & $1.4e4$  & $2.1e6$  & $1.7e8$ \\
    & Obj Med      & $22.19$  & $51.17$  & $501.9$  & $5437$   & $7.0e6$ \\
    & Feasible     & $55\%$   & $59\%$   & $40\%$   & $6\%$    & $18\%$ \\
    & Solved       & $100\%$  & $100\%$  & $58\%$   & $7\%$    & $22\%$ \\
    & Time         & $0.1805$ & $0.5570$ & $1.2396$ & $9.2334$ & $1064$ \\
\midrule

\multirow{5}{*}{N1}
    & Obj Mean     & $40.37$  & $87.83$  & $3.7e8$  & $8.3e12$ & $1.2e15$ \\
    & Obj Med      & $27.93$  & $77.34$  & $957.4$  & $9379$   & $1.0e5$ \\
    & Feasible     & $100\%$  & $100\%$  & $100\%$  & $95\%$   & $78\%$ \\
    & Solved       & $100\%$  & $100\%$  & $100\%$  & $95\%$   & $78\%$ \\
    & Time         & $0.0323$ & $0.0813$ & $0.2608$ & $71.91$  & $782.1$ \\
\midrule[0.3pt]\bottomrule[1pt]

\end{tabular}
}
\end{table}


\paragraph{Q1. How do learning-based methods compare to traditional solvers and heuristics?}
As shown in~\Cref{tab:QP_Stat},~\Cref{tab:NC_Stat} and~~\Cref{tab:RB_Stat}, traditional methods (EX, RR, and N1) scale poorly on larger instances: EX and N1 often fail to find a feasible solution within the 1000-second time limit, N1 suffers from numerical instability, and RR frequently produces infeasible solutions as problem size increases.. In contrast, RC and LT consistently return high-quality solutions within milliseconds. For IQPs and INPs, they match or surpass the objective values obtained by EX while achieving substantially higher feasibility rates than heuristic baselines. For MIRBs, RC and LT often reach solution quality comparable to or better than EX. Overall, the learning-based methods provide competitive solution quality with several orders of magnitude speedups.

\paragraph{Q2. How effective is the integer feasibility projection?}
\Cref{tab:QP_Stat}, \Cref{tab:NC_Stat}, and \Cref{tab:RB_Stat} show that RC-P and LT-P successfully recover feasibility on all test instances with small computational overhead. For IQPs and INPs, constraint violations produced by RC and LT are sparse and of small magnitude, allowing the projection step to restore feasibility with negligible impact on objective quality. In contrast, for MIRBs, constraint violations become more pronounced as the problem dimension increases, making projection essential: while feasibility degrades for RC and LT at larger scales, RC-P and LT-P satisfy all constraints even for the largest instances. These trends are further illustrated by the violation visualizations in Appendix~\ref{app:viol_figures}.

\paragraph{Q3. How does the choice of penalty weight affect performance?}
The penalty weight $\lambda$ in~\Cref{eq:loss} balances objective minimization against constraint satisfaction. To evaluate its impact, we vary $\lambda$ from $0.1$ to $1000$ on $1000\times1000$ INPs using RC, LT, and their projection-enhanced variants RC-P and LT-P. As shown in~\Cref{fig:penalty}, smaller penalty values lead to better objective values but also increase the proportion of infeasible solutions for RC and LT. Larger penalty weights improve feasibility but may degrade objective quality. Notably, applying the projection step (up to $1000$ iterations) restores feasibility even when the penalty is too small for RC and LT to satisfy the constraints, while largely preserving the low objective values associated with small penalties. This pattern is consistent across all benchmarks and suggests that RC-P and LT-P can effectively operate with penalty weights smaller than those used in the main experiments.

\begin{figure}[htbp!]
    \centering    \includegraphics[width=0.8\textwidth]{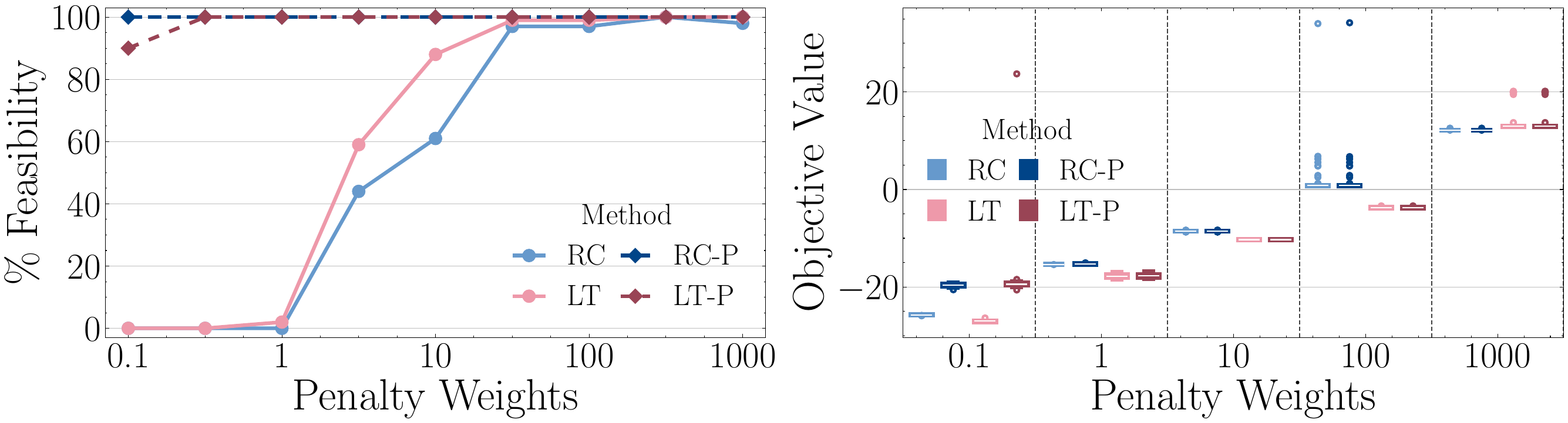}
    \caption{Illustration of the proportion of feasible solutions (Top) and objective value (Bottom) for $1000\times1000$ INC on the test set. As the penalty weight increases, the fraction of feasible solutions increases while the objective value generally deteriorates.}
    \label{fig:penalty}
\end{figure}

\paragraph{Q4. How long is the training time?}
In addition to evaluating solution quality, feasibility, and inference speed, we measure the offline training time across different problem sizes. As shown in~\Cref{tab:train_times_all}, the learning-based methods exhibit favorable scaling: training requires only a few minutes for smaller instances and remains under 30 minutes even for the largest problems. For large-scale settings, a single training run is often faster than finding the first feasible solution for one instance using the exact solver. When solving many instances repeatedly, this offline cost is amortized, making the proposed approach particularly attractive in applications requiring rapid or large-scale deployment.

\begin{table}[htb!]
\centering
\caption{Training Times (in seconds) for RC and LT methods across different problem sizes. Each method was set to train 9{,}000 instances for each problem for 200 epochs, with 1{,}000 instances reserved for validation per epoch and early stopping applied.}
\label{tab:train_times_all}

\resizebox{0.8\linewidth}{!}{%
\begin{minipage}{\linewidth}

\centering
\caption*{\textbf{Training Times for IQPs}}
\label{tab:QP_Time}
\begin{tabular}{l|rrrrrr}
\toprule[1pt]\midrule[0.3pt]
\textbf{Method} & \textbf{20×20} & \textbf{50×50} & \textbf{100×100} & \textbf{200×200} & \textbf{500×500} & \textbf{1000×1000}\\
\midrule
RC & 153.98 & 237.11 & 141.15 & 149.43 & 606.23 & 727.32\\
LT & 154.33 & 158.61 & 128.86 & 139.17 & 458.62 & 462.41\\
\midrule[0.3pt]\bottomrule[1pt]
\end{tabular}

\vspace{0.8em}

\centering
\caption*{\textbf{Training Times for INPs}}
\label{tab:NC_Time}
\begin{tabular}{l|rrrrrr}
\toprule[1pt]\midrule[0.3pt]
\textbf{Method} & \textbf{20×20} & \textbf{50×50} & \textbf{100×100} & \textbf{200×200} & \textbf{500×500} & \textbf{1000×1000}\\
\midrule
RC & 173.02 & 138.53 & 136.01 & 104.05 & 116.01 & 156.85 \\
LT & 104.35 & 88.41 & 111.38 & 89.24 & 230.52 & 195.67\\
\midrule[0.3pt]\bottomrule[1pt]
\end{tabular}

\vspace{0.8em}

\centering
\caption*{\textbf{Training Times for MIRBs}}
\label{tab:RB_Time}
\begin{tabular}{l|rrrrr}
\toprule[1pt]\midrule[0.3pt]
\textbf{Method} & \textbf{2×4} & \textbf{20×4} & \textbf{200×4} & \textbf{2000×4} & \textbf{20000×4} \\
\midrule
RC & 230.68 & 112.35 & 75.49 & 106.76 & 5227.05 \\
LT & 126.60 & 125.11 & 86.43 & 84.61 & 6508.41 \\
\midrule[0.3pt]\bottomrule[1pt]
\end{tabular}

\end{minipage}%
} 

\end{table}

\paragraph{Q5. How do the proposed correction layers contribute to solution quality and feasibility?}
To isolate the contribution of the correction layers $\varphi_{\Theta_2}$, we compare the full models with two reduced variants. As shown in~\Cref{tab:Abla_all}, both Rounding after Learning (RL) and Rounding with STE (RS) underperform the proposed RC and LT methods. In RL, only the predictor $\pi_{\Theta_1}$ is trained and integer rounding is applied post-hoc, so the model is never optimized on the resulting mixed-integer solution. This leads to large objective deviations and frequent constraint violations. RS incorporates integer outputs during training via the straight-through estimator (\Cref{algo:ste}), improving feasibility relative to RL, but the rounding rule is fixed and instance-agnostic. As a result, RS cannot adapt rounding decisions to the relaxed solution or constraint structure, and consequently lacks the refinement achieved by the learnable correction layers $\varphi_{\Theta_2}$. 

\begin{table}[htb!]
\centering
\caption{Ablations for IQPs, INPs, and MIRB. Each problem size is evaluated on a test set of 100 instances. ``Obj Mean" and ``Obj Med" represent the mean and median objective values for this minimization problem, with smaller values being better. ``Feasible" denotes the fraction of feasible solutions, and ``Time     "  is the average solving/inference time per instance.}
\label{tab:Abla_all}

\resizebox{0.9\linewidth}{!}{%
\begin{minipage}{\linewidth}

\centering
\caption*{\textbf{Ablation Study for IQPs.}}
\label{tab:QP_Abla}
\begin{tabular}{ll|rrrrrr}
\toprule[1pt]\midrule[0.3pt]
\textbf{Method} & \textbf{Metric} & \textbf{20×20} & \textbf{50×50} & \textbf{100×100} & \textbf{200×200} & \textbf{500×500} & \textbf{1000×1000} \\
\midrule
\multirow{4}{*}{RL} 
    & Obj Mean         & $-4.726$  & $-14.52$   & $-17.22$   & $-37.14$  & $-89.81$   & $-176.6$  \\
    & Obj Median       & $-4.716$  & $-14.52$   & $-17.27$   & $-37.15$  & $-89.81$   & $-176.6$  \\
    & Feasible         & $64\%$    & $42\%$     & $23\%$     & $10\%$    & $0\%$      & $0\%$     \\
    & Time (Sec)       & $0.0004$  & $0.0004$   & $0.0005$   & $0.0005$  & $0.0005$   & $0.0011$  \\
\midrule
\multirow{4}{*}{RS} 
    & Obj Mean         & $-3.929$  & $-11.93$   & $-10.58$   & $-24.72$  & $-54.93$   & $-110.7$  \\
    & Obj Median       & $-3.963$  & $-11.96$   & $-10.58$   & $-24.72$  & $-54.93$   & $-110.6$  \\
    & Feasible         & $100\%$   & $100\%$    & $100\%$    & $100\%$   & $100\%$    & $100\%$   \\
    & Time (Sec)       & $0.0010$  & $0.0011$   & $0.0013$   & $0.0012$  & $0.0016$   & $0.0031$  \\
\midrule[0.3pt]\bottomrule[1pt]
\end{tabular}

\vspace{0.8em}

\centering
\caption*{\textbf{Ablation Study for INPs.}}
\label{tab:NC_Abla}
\begin{tabular}{ll|rrrrrr}
\toprule[1pt]\midrule[0.3pt]
\textbf{Method} & \textbf{Metric} & \textbf{20×20} & \textbf{50×50} & \textbf{100×100} & \textbf{200×200} & \textbf{500×500} & \textbf{1000×1000} \\
\midrule
\multirow{4}{*}{RL} 
    & Obj Mean       & $-0.138$  & $-0.629$  & $-1.581$  & $-4.196$  & $-11.531$  & $-23.64$  \\
    & Obj Median     & $-0.148$  & $-0.655$  & $-1.554$  & $-4.196$  & $-11.531$  & $-23.64$  \\
    & Feasible       & $87\%$    & $51\%$    & $15\%$    & $0\%$     & $0\%$      & $0\%$     \\
    & Time (Sec)     & $0.0005$  & $0.0005$  & $0.0006$  & $0.0006$  & $0.0006$   & $0.0013$  \\
\midrule
\multirow{4}{*}{RS} 
    & Obj Mean       & $0.292$   & $1.734$   & $2.849$   & $4.921$   & $9.511$    & $25.36$   \\
    & Obj Median     & $0.284$   & $1.736$   & $2.841$   & $4.907$   & $9.511$    & $25.36$   \\
    & Feasible       & $100\%$   & $100\%$   & $100\%$   & $100\%$   & $100\%$    & $100\%$   \\
    & Time (Sec)     & $0.0012$  & $0.0011$  & $0.0012$  & $0.0013$  & $0.0018$   & $0.0031$  \\
\midrule[0.3pt]\bottomrule[1pt]
\end{tabular}

\vspace{0.8em}

\centering
\caption*{\textbf{Ablation Study for MIRBs.}}
\label{tab:RB_Abla}
\begin{tabular}{ll|rrrrr}
\toprule[1pt]\midrule[0.3pt]
\textbf{Method} & \textbf{Metric} & \textbf{2×4} & \textbf{20×4} & \textbf{200×4} & \textbf{2000×4} & \textbf{20000×4} \\
\midrule
\multirow{4}{*}{RL} 
    & Obj Mean       & $58.34$   & $63.70$  & $605.9$  & $6222$   & $68364$  \\
    & Obj Median     & $58.00$   & $61.95$  & $609.0$  & $5950$   & $69087$  \\
    & Feasible       & $14\%$    & $64\%$   & $56\%$   & $72\%$   & $69\%$   \\
    & Time (Sec)     & $0.0006$  & $0.0005$ & $0.0005$ & $0.0008$ & $0.0014$ \\
\midrule
\multirow{4}{*}{RS} 
    & Obj Mean       & $25.095$  & $69.36$  & $684.7$  & $6852$   & $72910$ \\
    & Obj Median     & $25.353$  & $68.58$  & $663.1$  & $6509$   & $68904$ \\
    & Feasible       & $100\%$   & $97\%$   & $100\%$  & $99\%$   & $61\%$  \\
    & Time (Sec)     & $0.0010$  & $0.0010$ & $0.0012$ & $0.0019$ & $0.0103$ \\
\midrule[0.3pt]\bottomrule[1pt]
\end{tabular}

\end{minipage}%
} 

\end{table}

\section{Conclusion}
\label{sec:cncl}

We propose a fully learning-based, solver-free framework for pMINLP, enabling neural networks to generate feasible and high-quality mixed-integer solutions without optimal labels. The method integrates differentiable integer correction layers trained with a self-supervised objective, and augments them with an efficient gradient-based feasibility projection that guarantees satisfaction of mixed-integer inequality constraints under mild structural assumptions. The projection adds negligible inference-time overhead, making the approach practical for large-scale problems.
Empirically, the proposed methods outperform classical heuristics and match or surpass exact solvers across diverse benchmark classes, including very large-scale instances on which exact approaches become intractable. To the best of our knowledge, this is the first learning-to-optimize framework to provide feasibility guarantees for general parametric MINLPs that can successfully solve problems with tens of thousands of decision variables.

Several limitations motivate future research. Our feasibility guarantees currently apply only to inequality-constrained MINLPs and require specific yet generic assumptions; extending these guarantees to equality-constrained MINLPs remains an open problem. Moreover, hybrid architectures that combine differentiable optimization layers~\citep{DiffCVxLayers2019}, variable completion~\citep{donti2021dc3}, or domain-specific architectures such as \citet{pan2020deepopf, tordesillas2023rayen} may further enhance feasibility and generalization.

Taken together, our results suggest that learning-based approaches can serve as a scalable and practical alternative to exact solvers for a wide range of parametric MINLPs, opening new possibilities for deploying optimization-driven intelligence in large-scale, real-time, and data-rich scientific and engineering systems.

\section*{Acknowledgments}

This research was supported by the Ralph O’Connor Sustainable Energy Institute at Johns Hopkins University.
Early version of this research was also supported by the Data Model Convergence (DMC) initiative via the Laboratory Directed Research and Development (LDRD) investments at Pacific Northwest National Laboratory (PNNL). PNNL is a multi-program national laboratory operated for the U.S. Department of Energy (DOE) by Battelle Memorial Institute under Contract No. DE-AC05-76RL0-1830.



\newpage
\bibliographystyle{unsrtnat}
\bibliography{arxiv_ref}

\newpage
\appendix

\section{Details of Integer Correction Layers}
\label{app:corr}

This appendix provides additional details for the integer correction layers introduced in Section~\ref{subsec:int_corr}. We include explicit formulations, gradient derivations, and smoothness analyses for both \textit{Rounding Classification (RC)} and \textit{Learnable Threshold (LT)} variants.

\subsection{Details of Rounding Classification}

\paragraph{Forward Pass.}
The key step of the \textit{Rounding Classification} (RC) approach is performed in line~6 of~\Cref{algo:l2o}. For the integer variables, RC applies a stochastic soft-rounding mechanism to the neural network output ${\bm{h}}_z = \bm{\delta}_{\Theta_2} (\bar{\bm{x}}, {\bm \xi})$, generating a binary decision vector ${\bm{b}} \in \{0,1\}^{n_z}$ that determines whether each fractional component of $\bar{\bm{x}}_z$ is rounded up or down.
To enable gradient-based training, RC employs the Gumbel--Sigmoid. This stochastic relaxation perturbs the logits ${\bm{h}}_z$ with Gumbel noise and passes them through a temperature-controlled Sigmoid, yielding soft probabilities ${\bm{v}} \in (0,1)^{n_z}$. During the forward pass, the final discrete rounding decisions are obtained as
$
{\bm{b}} = \mathbb{I}({\bm{v}} > 0.5), \,
\hat{\bm{x}}_z = \lfloor \bar{\bm{x}}_z \rfloor + {\bm{b}}.
$

\paragraph{Backward Pass.}
Because the binarization operation is non-differentiable, the gradient of ${\bm b}$ w.r.t. the  ${\bm v}$  is approximated with STE.
Hence, the final gradient of ${\bm b}$ w.r.t. the input logit ${\bm h}$ is approximated by the gradient of the Gumbel-Sigmoid given as
$
    \frac{\partial \bm{b}}{\partial \bm{h}} := \frac{\partial \bm{v}}{\partial \bm{h}} = \frac{1}{\tau} \cdot \bm{v} \odot (1 - \bm{v})
$,
where the gradient expression is computed elementwise, with $\odot$ denoting the Hadamard (elementwise) product.
This formulation allows gradients to flow during backpropagation despite the non-differentiable binarization. 

\paragraph{Lipschitz Smoothness $L_{\varphi}$ of the Gradient.}
Now applying the chain rule, the gradient of the binary decision vector $\bm{b}$ with respect to the inputs $\bar{\bm{x}}_z$ of the network $\delta_{\Theta_2}$ is given by
$$
\frac{\partial \bm{b}}{\partial \bar{\bm{x}}_z}
= \frac{\partial \bm{b}}{\partial \bm{h}_z} \cdot \frac{\partial \bm{h}_z}{\partial \bar{\bm{x}}_z}
\approx \frac{\partial \bm{v}}{\partial \bm{h}_z} \cdot \frac{\partial \delta_{\Theta_2}(\bar{\bm{x}}, \bm{\xi})}{\partial \bar{\bm{x}}_z}
= \frac{1}{\tau} \cdot \bm{v} \odot (1 - \bm{v})  \cdot \frac{\partial \delta_{\Theta_2}(\bar{\bm{x}}, \bm{\xi})}{\partial \bar{\bm{x}}_z}.
$$
 Meanwhile, since the floor operation $\lfloor \bar{\bm{x}}_z \rfloor$ is non-differentiable, we again apply the STE by treating it as the identity function during backpropagation:
$ 
\frac{\partial \lfloor \bar{\bm{x}}_z \rfloor}{\partial \bar{\bm{x}}_z} := \bm{I},
$
hence,  contributing a Lipschitz constant of $1$ to $\bar{\bm{x}}_z$.

The Jacobian of the RC correction layer to the neural network input  $\bar{\bm{x}}$ can thus be approximated as:
$$
\nabla_{\bar{\bm{x}}} \varphi_{\Theta_2}(\bar{\bm{x}}) \approx \bm{I} + \frac{1}{\tau} \cdot \operatorname{diag}(\bm{v} \odot (1 - \bm{v})) \cdot \nabla_{\bar{\bm{x}}} \delta_{\Theta_2}(\bar{\bm{x}}, \bm{\xi}).
$$

We now analyze the Lipschitz constant of this gradient map. Define the scalar function. Let us define the scalar function:
$
g(h) = \frac{1}{\tau} \cdot \sigma\left( z \right) \left( 1 - \sigma\left(z \right) \right),
$
where $\sigma(\cdot) $is the sigmoid function, $z = \frac{h + \epsilon_1 - \epsilon_2}{\tau}$, and $\epsilon_1, \epsilon_2 \sim \text{Gumbel}(0, 1) $ are independent samples. 
This function corresponds to the elementwise gradient of the Gumbel-Sigmoid output with respect to its input logit $h$, under the STE approximation we use in the backward pass. It reflects the sensitivity of the soft relaxation $\bm{v}$ to changes in the perturbed logits $\bm{h}$. The shape and boundedness of $g(h)$ directly influence the stability and smoothness of our optimization process.

The maximum absolute value of this derivative over all $z \in \mathbb{R}$ determines the Lipschitz constant. The product $\sigma(z)(1 - \sigma(z))(1 - 2\sigma(z))$ attains its maximum absolute value at $\sigma(z) = \frac{1}{2} \pm \frac{1}{2\sqrt{3}}$, yielding
$
|g'(h)| \leq \frac{1}{6\sqrt{3}\tau^2} \approx \frac{0.0962}{\tau^2}.
$
Hence, the Lipschitz constant of the STE-approximated gradient of the Gumbel-Sigmoid layer is bounded by
$
L_{\text{Gumbel}} \leq \frac{0.0962}{\tau^2}.
$
This implies that as the temperature $\tau $ decreases (to make the sampling sharper), the gradient becomes more sensitive to changes in $h$, which can affect training stability. 

We now estimate the Lipschitz constant of the approximate Jacobian $\nabla_{\bar{\bm{x}}} \varphi_{\Theta_2}(\bar{\bm{x}})$, which is central to the convergence analysis of the integer feasibility projection (see Theorem~\ref{thm:relu_exterior_convergence}). Since both the Gumbel modulation term and the neural network are Lipschitz continuous, the local Lipschitz constant is bounded by
$
L_{\varphi} \leq L_{\text{Gumbel}} \cdot \left\| \nabla_{\bar{\bm{x}}} \delta_{\Theta_2}(\bar{\bm{x}}, \bm{\xi}) \right\|,
$
where $\left\| \cdot \right\|$ denotes the spectral (operator) norm, i.e., the largest singular value of the Jacobian.
For a global Lipschitz estimate, we have
$
L_{\varphi}^{\text{global}} \leq \frac{0.0962}{\tau^2} \cdot \sup_{\bar{\bm{x}}} \left\| \nabla_{\bar{\bm{x}}} \delta_{\Theta_2}(\bar{\bm{x}}, \bm{\xi}) \right\|.
$
This bound highlights how the temperature parameter $\tau$ and the smoothness of the logit network jointly affect the stability of the correction layer. In our setup with $\tau = 1$, this yields a concrete local bound $L_{\varphi} \leq 0.0962 \cdot \left\| \nabla_{\bar{\bm{x}}} \delta_{\Theta_2}(\bar{\bm{x}}, \bm{\xi}) \right\|$.

\subsection{Details of Learnable Threshold}

\paragraph{Forward Pass.}
The \textit{Learnable Threshold} (LT) approach, detailed in of~\Cref{algo:l2o}, provides an alternative correction strategy. Instead of relying on probability as in RC, LT learns to predict a threshold vector ${\bm h}^i \in [0, 1]^{n_z}$ by applying a Sigmoid activation, which guides rounding decisions for each integer variable. These thresholds ${\bm h}$ are then compared against the fractional part of the relaxed integer variables. Specifically, a variable is rounded up if its fractional part $\bar{{\bm x}}_z - \hat{{\bm x}}_z$ exceeds the threshold ${\bm h}$, and rounded down otherwise. Thus, the binary decision in the forward pass is computed as
$
{\bm b} = \mathbb{I} (\bar{{\bm x}}_z - \hat{{\bm x}}_z - {\bm h} > 0).
$

\paragraph{Backward Pass.}
Although the forward pass applies a hard threshold, the backward pass approximates the gradient from the following smoothed Sigmoid surrogate
$
{\bm v} = \frac{1}{1 + \exp\left( - \beta \cdot (\bar{{\bm x}}_z - \hat{{\bm x}}_z - {\bm h}) \right)},
$
where $\beta > 0$ controls the steepness of the approximation. A higher $\beta$ yields sharper transitions. We use $\beta = 10$ in our experiments.
Thus, the approximated partial derivatives of ${\bm b}$ w.r.t. the threshold ${\bm h}$ are
$
    \frac{\partial \bm{b}}{\partial \bm{h}} := \frac{\partial \bm{v}}{\partial \bm{h}} = -\beta \cdot {\bm v} \odot (1 - {\bm v}).
$

\paragraph{Lipschitz Smoothness $L_{\varphi}$ of the Gradient.}
Since the gradient of the LT correction layers is approximated with a scaled sigmoid, let's analyze its maximum slope. 
The product $\sigma(z)(1 - \sigma(z))$ is maximized at $z = 0$, where
$ 
\sigma(0) = 0.5 \, \Rightarrow \, \sigma(0)(1 - \sigma(0)) = 0.25.
$
So the maximum value of the derivative of the scaled sigmoid function is
\[
\max_x \left| \frac{d}{dx} \sigma(\beta x) \right| = \beta \cdot \max_z \sigma(z)(1 - \sigma(z)) = \beta \cdot 0.25 = \frac{\beta}{4}.
\]

Hence, similar to the RC method, the local Lipschitz constant of the approximate Jacobian $\nabla_{\bar{\bm{x}}} \varphi_{\Theta_2}$ for the LT correction layer is bounded by:
$ 
L_{\varphi} \leq \frac{\beta}{4} \cdot  \left\| \nabla_{\bar{\bm{x}}} \delta_{\Theta_2}(\bar{\bm{x}}, \bm{\xi}) \right\|,
$
while the global Lipschitz constant is bounded by
$
L_{\varphi}^{\text{global}} \leq \frac{\beta}{4} \cdot \sup_{\bar{\bm{x}}} \left\| \nabla_{\bar{\bm{x}}} \delta_{\Theta_2}(\bar{\bm{x}}, \bm{\xi}) \right\|.
$
In our setup with $\beta = 10$, this yields the concrete local bound 
$ L_{\varphi} \leq 2.5 \cdot \left\| \nabla_{\bar{\bm{x}}} \delta_{\Theta_2}(\bar{\bm{x}}, \bm{\xi}) \right\|$.

\section{Additional Convergence Analysis}
\label{app:theory}

\subsection{Asymptotic Convergence Based on the Łojasiewicz Inequality}
\label{subapp:proof_Łoj}

Theorem~\ref{thm:relu_exterior_convergence} relies on mild regularity assumptions, as noted in Remark~\ref{rmk:regularity_lipschitz}. Alternatively, convergence can be shown using the Łojasiewicz inequality by leveraging the analytic or subanalytic structure of ${\bm g}$ and $\varphi$, without requiring explicit smoothness or curvature bounds.

\begin{theorem}[Feasibility Convergence of Integer Feasibility Projection via Łojasiewicz Inequality]
\label{thm:loja_convergence}
Let $\mathcal{V}({\bm x}) = \sum_{j=1}^{n_c} \max(0, g_j(\varphi({\bm x})))$, where ${\bm g} \in C^\omega$ (real analytic) and $\varphi \in C^\infty$ are composed with piecewise-linear ReLU, and suppose $\mathcal{V}({\bm x}) > 0$ on the region $\mathcal{D} \subset \mathbb{R}^n$.
Then $\mathcal{V}$ is subanalytic and satisfies the Łojasiewicz gradient inequality at every critical point ${\bm x}^* \in \mathcal{D}$. As a result, for any initialization ${\bm x}^{(0)} \in \mathcal{D}$, the gradient descent iterates
$
{\bm x}^{(k+1)} = {\bm x}^{(k)} - \eta \nabla \mathcal{V}({\bm x}^{(k)}), \quad \eta \in (0, 1/L],
$
converge to a critical point ${\bm x}^\infty \in \mathcal{D}$, and $\mathcal{V}({\bm x}^{(k)}) \to \mathcal{V}({\bm x}^\infty)$.
If further, all critical points ${\bm x}^* \in \mathcal{D}$ with $\mathcal{V}({\bm x}^*) > 0$ are not local minima of $\mathcal{V}$, then
$
\lim_{k \to \infty} \mathcal{V}({\bm x}^{(k)}) = 0.
$
\end{theorem}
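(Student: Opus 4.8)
The plan is to layer the classical Łojasiewicz/Kurdyka descent argument on top of the smoothness and monotone-descent estimates already established in Theorem~\ref{thm:relu_exterior_convergence}, and then use the ``no positive-value local minimum'' hypothesis to pin the limiting value at $0$. I would proceed in three stages.

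\emph{Stage 1 (definability and the Łojasiewicz gradient inequality).} First I would restrict attention to a bounded box $B$ containing all iterates $\{\mathbf{x}^{(k)}\}$, which is legitimate by compactness of the relevant sublevel set together with the compact-domain assumption of Remark~\ref{rmk:regularity_lipschitz}. On $B$, each real-analytic $g_j$ is restricted-analytic, hence definable in the o-minimal structure $\mathbb{R}_{\mathrm{an}}$; the correction map $\varphi$ is a composition of a neural network with smooth (analytic) activations together with the piecewise-linear floor and indicator operations, all definable; and $t\mapsto\max(0,t)$ is piecewise linear, hence semialgebraic. Since definable functions are closed under finite sums and composition, $\mathcal{V}$ is definable on $B$, hence subanalytic there. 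Combined with Theorem~\ref{thm:relu_exterior_convergence}(i), which gives $\mathcal{V}\in C^1(\mathcal{D})$ with $\nabla\mathcal{V}(\mathbf{x})=\sum_{j\in I_{\mathbf{x}}}\nabla\varphi(\mathbf{x})^\top\nabla g_j(\varphi(\mathbf{x}))$, the Kurdyka--Łojasiewicz theorem for definable functions yields: at every critical point $\mathbf{x}^*\in\mathcal{D}$ there are constants $C>0$, $\theta\in[\tfrac{1}{2},1)$ and a neighborhood $U$ such that $|\mathcal{V}(\mathbf{x})-\mathcal{V}(\mathbf{x}^*)|^{\theta}\le C\,\|\nabla\mathcal{V}(\mathbf{x})\|$ on $U\cap\mathcal{D}$.

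\emph{Stage 2 (convergence of the whole iterate sequence).} With $\eta\in(0,1/L]$ and $L$ the gradient-Lipschitz constant from Theorem~\ref{thm:relu_exterior_convergence}, the descent lemma proved there gives $\mathcal{V}(\mathbf{x}^{(k)})-\mathcal{V}(\mathbf{x}^{(k+1)})\ge\tfrac{\eta}{2}\|\nabla\mathcal{V}(\mathbf{x}^{(k)})\|^2$, the gradient step satisfies $\|\mathbf{x}^{(k+1)}-\mathbf{x}^{(k)}\|=\eta\|\nabla\mathcal{V}(\mathbf{x}^{(k)})\|$, and Theorem~\ref{thm:relu_exterior_convergence}(ii) gives $\mathcal{V}(\mathbf{x}^{(k)})\downarrow\mathcal{V}^\star$ with $\|\nabla\mathcal{V}(\mathbf{x}^{(k)})\|\to0$. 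By monotonicity the iterates lie in a compact sublevel set and thus admit an accumulation point $\mathbf{x}^\infty$. If $\mathbf{x}^\infty\in\partial\mathcal{D}$ then $\mathcal{V}(\mathbf{x}^\infty)=0$, so $\mathcal{V}^\star=0$ and the theorem already follows; otherwise $\mathbf{x}^\infty\in\mathcal{D}$ is a critical point with value $\mathcal{V}^\star$, and I invoke the Łojasiewicz inequality at $\mathbf{x}^\infty$. The standard telescoping argument (Absil--Mahony--Andrews; Attouch--Bolte--Svaiter) applied to $\Phi(s)=\tfrac{C}{1-\theta}(s-\mathcal{V}^\star)^{1-\theta}$ then gives $\sum_k\|\mathbf{x}^{(k+1)}-\mathbf{x}^{(k)}\|<\infty$, so the entire sequence converges to $\mathbf{x}^\infty$ and, by continuity of $\nabla\mathcal{V}$ and $\mathcal{V}$, $\mathcal{V}(\mathbf{x}^{(k)})\to\mathcal{V}(\mathbf{x}^\infty)$.

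\emph{Stage 3 (the crux: ruling out a positive limiting value).} Suppose $\mathcal{V}^\star=\mathcal{V}(\mathbf{x}^\infty)>0$. Then $\mathbf{x}^\infty\in\mathcal{D}$ is a critical point of $\mathcal{V}$ with positive value, hence by hypothesis not a local minimum of $\mathcal{V}$. This is where the real difficulty concentrates: gradient descent does not avoid non-minimizing critical points for \emph{every} initialization (any point on the stable manifold of a saddle converges to that saddle). I would close the gap by invoking the structural result of Theorem~\ref{thm:local_minima_h}: arguing as in the proof of Theorem~\ref{thm:relu_exterior_convergence}(iii), $\nabla\mathcal{V}(\mathbf{x}^\infty)=0$ together with a nonvanishing active-constraint gradient forces $\nabla\varphi(\mathbf{x}^\infty)=0$, so $\mathbf{x}^\infty$ is a critical point of $\varphi$, and Theorem~\ref{thm:local_minima_h} then excludes such points as local minima of $\mathcal{V}$ and --- crucially --- as attractors of the $\mathcal{V}$-iteration, contradicting Stage~2. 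Hence $\mathcal{V}^\star=0$, i.e.\ $\lim_{k\to\infty}\mathcal{V}(\mathbf{x}^{(k)})=0$. I expect essentially all the substantive work to live here, in making ``not an attractor of gradient descent'' rigorous --- most cleanly via a strict-saddle / unstable-direction property inherited from the geometry of $\varphi$, or, with a measure-theoretic caveat on the initialization, via the stable-manifold theorem for descent maps --- whereas Stages~1 and~2 are routine given o-minimality and the estimates already contained in Theorem~\ref{thm:relu_exterior_convergence}.
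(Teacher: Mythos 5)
Your proposal follows essentially the same route as the paper's proof: subanalyticity of $\mathcal{V}$ yielding the Łojasiewicz gradient inequality at critical points, the descent lemma plus the Attouch--Bolte--Svaiter telescoping argument to upgrade subsequential to full-sequence convergence, and the non-minimality hypothesis to force $\mathcal{V}^\star = 0$. Your Stage~3 is in fact more candid than the paper's, which simply asserts that a non-minimizing critical point cannot attract the iterates (leaning on Theorem~\ref{thm:local_minima_h} and genericity remarks) without the stable-manifold or measure-zero-initialization caveat you correctly flag as the substantive remaining work.
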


\begin{proof}{}
 We divide the proof of Theorem~\ref{thm:loja_convergence} into two parts: i) convergence of gradient descent to a critical point, and  ii) convergence to feasibility under a non-minimality assumption.
\paragraph{Subanalyticity and Łojasiewicz inequality.} 
Since ${\bm g}: \mathbb{R}^n \to \mathbb{R}^{n_c}$ is real analytic and $\varphi: \mathbb{R}^n \to \mathbb{R}^n$ is $C^\infty$, their composition ${\bm g} \circ \varphi$ is real analytic as well. The ReLU function $\max(0, z_j)$ is piecewise analytic and semialgebraic.
Therefore, the function $\mathcal{V}({\bm x}) = \sum_{j=1}^{n_c} \max(0, g_j(\varphi({\bm x})))$ is a finite sum of semialgebraic (hence subanalytic) functions composed with analytic mappings, and is itself a \emph{subanalytic} and $C^1$ function on the open region
$
\mathcal{D} := \{ {\bm x} \in \mathbb{R}^n : \mathcal{V}({\bm x}) > 0 \}.
$
By a standard result in nonsmooth analysis (see~\cite{bolte2007_clarke}), every $C^1$ subanalytic function satisfies the \emph{Łojasiewicz gradient inequality} at all its critical points. That is, for each critical point ${\bm x}^* \in \mathcal{D}$, there exist constants $C > 0$, $\theta \in [0,1)$, and a neighborhood $\mathcal{U} \subset \mathcal{D}$ of ${\bm x}^*$ such that
$
\| \nabla \mathcal{V}({\bm x}) \| \ge C (\mathcal{V}({\bm x}) - \mathcal{V}({\bm x}^*))^\theta, \, \forall x \in \mathcal{U}.
$

\paragraph{Convergence of gradient descent to a critical point.}
Let ${\bm x}^{(k+1)} = {\bm x}^{(k)} - \eta \nabla \mathcal{V}({\bm x}^{(k)})$ be the gradient descent update with constant step size $\eta \in (0, 1/L]$, where $L$ is a Lipschitz constant for $\nabla \mathcal{V}$ on compact subsets of $\mathcal{D}$ (as shown in Theorem~\ref{thm:relu_exterior_convergence}).
The descent lemma implies
$
\mathcal{V}({\bm x}^{(k+1)}) \le \mathcal{V}({\bm x}^{(k)}) - \frac{\eta}{2} \| \nabla \mathcal{V}({\bm x}^{(k)}) \|^2.
$
Since $\mathcal{V}({\bm x}^{(k)}) \ge 0$ and is non-increasing, it converges to a finite limit $\mathcal{V}^\star \ge 0$. Furthermore, summing the descent inequality gives
$
\sum_{k=0}^\infty \| \nabla \mathcal{V}({\bm x}^{(k)}) \|^2 < \infty, \quad \Rightarrow \quad \lim_{k \to \infty} \| \nabla \mathcal{V}({\bm x}^{(k)}) \| = 0.
$
Hence, the sequence $\{{\bm x}^{(k)}\}$ has at least one accumulation point ${\bm x}^\infty \in \mathcal{D}$ with $\nabla \mathcal{V}({\bm x}^\infty) = 0$, i.e., a critical point. Since $\mathcal{V}$ is subanalytic and satisfies the Łojasiewicz inequality at all critical points, the full sequence ${\bm x}^{(k)} \to {\bm x}^\infty$, as shown by the convergence theorem for gradient descent on Łojasiewicz functions (see~\cite{attouch2013_convergence}).

\paragraph{Convergence to feasibility under non-minimality.}
Suppose now that ${\bm x}^\infty \in \mathcal{D}$ is a critical point with $\mathcal{V}({\bm x}^\infty) > 0$. Then, by the Łojasiewicz inequality
$
\| \nabla \mathcal{V}({\bm x}^{(k)}) \| \ge C (\mathcal{V}({\bm x}^{(k)}) - \mathcal{V}({\bm x}^\infty))^\theta.
$
But this contradicts the fact that $\| \nabla \mathcal{V}({\bm x}^{(k)}) \| \to 0$ unless $\mathcal{V}({\bm x}^{(k)}) \to \mathcal{V}({\bm x}^\infty)$. Thus $\mathcal{V}({\bm x}^{(k)}) \to \mathcal{V}({\bm x}^\infty) > 0$.
To rule this out, we assume that no such point ${\bm x}^\infty \in \mathcal{D}$ with $\mathcal{V}({\bm x}^\infty) > 0$ is a \emph{local minimum} of $\mathcal{V}$. That is, every critical point with $\mathcal{V}({\bm x}^*) > 0$ is a saddle or otherwise unstable.
This ensures that the gradient descent trajectory cannot converge to any such point ${\bm x}^\infty$. Hence, it must converge to a point where $\mathcal{V}({\bm x}^\infty) = 0$, i.e., feasibility is achieved at the limit
$
\lim_{k \to \infty} \mathcal{V}({\bm x}^{(k)}) = 0.
$
\end{proof}

\begin{remark}[Relationship Between Theorems~\ref{thm:relu_exterior_convergence} and~\ref{thm:loja_convergence}]
Theorem~\ref{thm:relu_exterior_convergence} shows that under a mild structural condition, gradient descent converges to feasibility with vanishing gradient norm.
Theorem~\ref{thm:loja_convergence} provides a complementary convergence result using the Łojaśiewicz gradient inequality, which is automatically satisfied due to the analytic structure of the composite function $\mathcal{V}$ representing our integer feasibility projection. This yields convergence to feasibility without requiring explicit curvature conditions. Hence, generalizing the convergence guarantees to non-smooth but subanalytic neural surrogates of the rounding operation $\varphi$, such as those with  ReLU activation functions.
However, this comes at the expense of losing explicit rates.
\end{remark}

\subsection{Robust Feasibility Convergence under Local Minima}
\label{subapp:proof_local}

While Theorem~\ref{thm:loja_convergence} guarantees convergence to feasibility under a general non-minimality condition, we now show that this condition is satisfied for a wide class of problematic critical points, namely, strict local minima of the integer correction layer $\varphi$.

\begin{theorem}[Strict Local Minima of $\varphi$ Do Not Trap Gradient Descent with ReLU--L1 Penalty]
\label{thm:local_minima_h}
Let $\varphi: \mathbb{R}^n \to \mathbb{R}^n$ be twice continuously differentiable, and let $\bm{g}: \mathbb{R}^n \to \mathbb{R}^{n_c}$ be continuously differentiable. Define the ReLU-based penalty function
$
\mathcal{V}(\bm{x}) := \sum_{j=1}^{n_c} \max(0, g_j(\varphi(\bm{x}))).
$
Suppose $\bm{x}^* \in \mathbb{R}^n$ is a strict local minimizer of $\varphi$, i.e.,
$
\nabla \varphi(\bm{x}^*) = 0, 
\, 
\nabla^2 \varphi(\bm{x}^*) \succ 0,
$
and that at least one constraint is violated at $\bm{x}^*$, i.e., $g_j(\varphi(\bm{x}^*)) > 0$ for some $j$. Then:
\begin{enumerate}
    \item $\nabla \mathcal{V}(\bm{x}^*) = 0$, so $\bm{x}^*$ is a stationary point of $\mathcal{V}$;
    \item $\bm{x}^*$ is not a local minimum of $\mathcal{V}$;
    \item For small perturbations $\bm{\delta}$, $\nabla \mathcal{V}(\bm{x}^* + \bm{\delta}) \ne 0$ generically;
    \item Therefore, gradient descent initialized near $\bm{x}^*$ will escape from it.
\end{enumerate}
\end{theorem}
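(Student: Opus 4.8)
The plan is to establish the four claims in order, all of them flowing from the single structural fact $\nabla\varphi({\bf x}^*)=0$. \textbf{Step 1 (stationarity).} Because each violated constraint has $g_j(\varphi({\bf x}^*))>0$ \emph{strictly}, continuity of $g_j\circ\varphi$ yields a neighborhood of ${\bf x}^*$ on which the active index set $I_{{\bf x}^*}$ is constant and $\mathcal{V}({\bf x})=\sum_{j\in I_{{\bf x}^*}}g_j(\varphi({\bf x}))$ is $C^2$ (strictly inactive constraints contribute an identically-zero term; a constraint exactly on the boundary has $\nabla(g_j\circ\varphi)({\bf x}^*)=\nabla\varphi({\bf x}^*)^\top\nabla g_j(\varphi({\bf x}^*))=0$, so its ReLU term is $O(\|{\bf x}-{\bf x}^*\|^2)$ and does not affect first-order behavior). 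Applying the chain-rule formula of \Cref{thm:relu_exterior_convergence}(i), $\nabla\mathcal{V}({\bf x}^*)=\sum_{j\in I_{{\bf x}^*}}\nabla\varphi({\bf x}^*)^\top\nabla g_j(\varphi({\bf x}^*))=0$, which is claim~(1).

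\textbf{Step 2 (non-minimality).} Here the key move is a second-order expansion. Differentiating the chain rule once more and using $\nabla\varphi({\bf x}^*)=0$, the bilinear term $\nabla\varphi^\top\nabla^2 g_j\nabla\varphi$ drops out and $\nabla^2(g_j\circ\varphi)({\bf x}^*)=\sum_{k=1}^{n}[\nabla g_j(\varphi({\bf x}^*))]_k\,\nabla^2\varphi_k({\bf x}^*)$, so locally $\mathcal{V}({\bf x}^*+\delta)=\mathcal{V}({\bf x}^*)+\tfrac12\,\delta^\top M\,\delta+o(\|\delta\|^2)$ with $M:=\sum_{k}c_k\,\nabla^2\varphi_k({\bf x}^*)$ and $c:=\nabla\big(\sum_{j\in I_{{\bf x}^*}}g_j\big)(\varphi({\bf x}^*))$. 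I would then produce a unit direction $\hat\delta$ with $\hat\delta^\top M\hat\delta<0$; moving from ${\bf x}^*$ along $\pm\hat\delta$ strictly decreases $\mathcal{V}$, so ${\bf x}^*$ is not a local minimum, giving claim~(2). \textbf{This is the main obstacle.} The literal hypotheses ($\nabla^2\varphi({\bf x}^*)\succ0$ and "some $g_j$ violated") do not by themselves force $M$ to be indefinite, because $M$ is a \emph{signed} combination of positive-definite component Hessians and can remain positive definite when the coefficients $c_k$ align favorably. The plan is to close this by making precise the genericity/non-degeneracy the theorem already invokes in item~(3): generically $c\neq0$, and one adds the natural condition that the active constraint gradients point "into" the bowl of $\varphi$ at ${\bf x}^*$ (equivalently that $\sum_{j\in I_{{\bf x}^*}}g_j\circ\varphi$ admits a descent direction at ${\bf x}^*$), under which $M$ has a negative eigenvalue; I would state this assumption explicitly rather than leave it implicit.

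\textbf{Steps 3--4 (isolated critical point and escape).} Claim~(3) follows from the same expansion applied to the gradient: the Jacobian satisfies $\nabla\varphi({\bf x}^*+\delta)=\big[(\nabla^2\varphi_k({\bf x}^*)\delta)^\top\big]_{k=1}^{n}+o(\|\delta\|)$, hence $\nabla\mathcal{V}({\bf x}^*+\delta)=M\delta+o(\|\delta\|)$, which is nonzero for all small $\delta\neq0$ whenever $M$ is nonsingular --- the generic case. For claim~(4), combine the strict descent direction from Step~2 with the descent lemma of \Cref{thm:relu_exterior_convergence}(ii): for $\eta\in(0,1/L]$ each step strictly decreases $\mathcal{V}$ away from stationary points, and the iterates cannot accumulate at ${\bf x}^*$ since $\mathcal{V}$ is non-increasing along the trajectory while, by~(2), every punctured neighborhood of ${\bf x}^*$ contains points with $\mathcal{V}<\mathcal{V}({\bf x}^*)$; alternatively one invokes the standard stable-manifold/strict-saddle escape results, whose hypothesis (a negative eigenvalue of the relevant Hessian, here $M$) is exactly what Step~2 supplies. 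Assembling Steps~1--4 proves the theorem and, in particular, shows such ${\bf x}^*$ cannot act as attractors, discharging the obligation left open in \Cref{thm:relu_exterior_convergence}(iii) and \Cref{thm:loja_convergence}.
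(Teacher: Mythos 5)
Your proposal follows essentially the same route as the paper's proof: chain rule at the violated constraints for stationarity, the Hessian formula $\nabla^2\mathcal{V}({\bf x}^*)=\sum_{j\in I_{{\bf x}^*}}\nabla^2\varphi({\bf x}^*)^\top\nabla g_j(\varphi({\bf x}^*))$ (which collapses because $\nabla\varphi({\bf x}^*)=0$ kills the $\nabla\varphi^\top\nabla^2 g_j\,\nabla\varphi$ term) for non-minimality, a first-order expansion of $\nabla\varphi$ for the perturbation claim, and descent plus non-minimality for escape. The one substantive point of divergence is that you explicitly flag the indefiniteness of $M=\sum_k c_k\nabla^2\varphi_k({\bf x}^*)$ as an unproved obstacle and propose to add a misalignment assumption; you should know that the paper's own proof has exactly the same gap and handles it the same way, stating only a \emph{sufficient} condition (existence of $v$ with $v^\top\nabla^2\varphi({\bf x}^*)v>0$ and $v^\top\nabla g_j(\varphi({\bf x}^*))<0$) and concluding item~(2) ``unless all gradients $\nabla g_j$ are strictly positive multiples of a single direction compatible with the curvature of $\varphi$,'' with the genericity of that misalignment deferred to a remark rather than added to the hypotheses. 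So your diagnosis is correct and your fix (stating the assumption explicitly) is arguably cleaner than the paper's. Two further minor points in your favor: your component-wise form $\sum_k[\nabla g_j]_k\,\nabla^2\varphi_k$ is the notationally correct version of the paper's tensor-abusing expression $\nabla^2\varphi^\top\nabla g_j$ for vector-valued $\varphi$; and your item~(3) argument via $\nabla\mathcal{V}({\bf x}^*+\delta)=M\delta+o(\|\delta\|)$ with $M$ nonsingular is sharper than the paper's, which only establishes $\nabla\varphi({\bf x}^*+\delta)\ne 0$ and then asserts the resulting sum is ``generically'' nonzero. Your escape argument in Step~4 (monotone decrease of $\mathcal{V}$ combined with the existence of strictly lower values in every punctured neighborhood, or a stable-manifold invocation) is also more careful than the paper's one-line claim that a nonvanishing gradient nearby suffices.
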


\begin{proof}
{}Now we prove Theorem~\ref{thm:local_minima_h} in the following steps.

\paragraph{Stationarity.}
Define the active index set at $\bm{x}^*$ as
$
I_{\bm{x}} := \{ j : g_j(\varphi(\bm{x}^*)) > 0 \}.
$
Because $g_j(\varphi(\bm{x}^*)) > 0$ for $j \in I_{\bm{x}}$, the ReLU terms are smooth in a neighborhood of $\bm{x}^*$, and thus locally
$
\mathcal{V}(\bm{x}) = \sum_{j \in I_{\bm{x}}} g_j(\varphi(\bm{x})).
$
By the chain rule,
$
\nabla \mathcal{V}(\bm{x}) = \sum_{j \in I_{\bm{x}}} \nabla \varphi(\bm{x})^\top \nabla g_j(\varphi(\bm{x})).
$
Since $\nabla \varphi(\bm{x}^*) = 0$, we have $\nabla \mathcal{V}(\bm{x}^*) = 0$. Hence $\bm{x}^*$ is a stationary point of $\mathcal{V}$.

\paragraph{Not a Local Minimum of $\mathcal{V}$.}
We now examine the curvature of $\mathcal{V}$ near $\bm{x}^*$.  
The Hessian expands as
$$
\nabla^2 \mathcal{V}(\bm{x}^*) 
= \sum_{j \in I_{\bm{x}}} \nabla^2 (g_j \circ \varphi)(\bm{x}^*)
= \sum_{j \in I_{\bm{x}}} 
\nabla^2 \varphi(\bm{x}^*)^\top \nabla g_j(\varphi(\bm{x}^*))
+ \nabla \varphi(\bm{x}^*)^\top \nabla^2 g_j(\varphi(\bm{x}^*)) \nabla \varphi(\bm{x}^*).
$$
Since $\nabla \varphi(\bm{x}^*) = 0$, the second term vanishes, yielding
$
\nabla^2 \mathcal{V}(\bm{x}^*) 
= \sum_{j \in I_{\bm{x}}} \nabla^2 \varphi(\bm{x}^*)^\top \nabla g_j(\varphi(\bm{x}^*)).
$
Each term scales the positive-definite curvature $\nabla^2 \varphi(\bm{x}^*)$ by the direction of $\nabla g_j(\varphi(\bm{x}^*))$. Unless all $\nabla g_j$ are perfectly aligned with this curvature, some directions will introduce negative or zero curvature, rendering $\nabla^2 \mathcal{V}(\bm{x}^*)$ indefinite. Hence $\bm{x}^*$ cannot be a local minimum of $\mathcal{V}$.

\begin{remark}[Directional Sufficient Condition for Non-Minimality]
\label{rmk:directional_condition}
A sufficient condition for ${\bm x}^*$ to \emph{not} be a local minimum of $\mathcal{V}$ is the existence of a direction 
${\bm v} \in \mathbb{R}^n$ and an active constraint $j \in I_{\bm x}$ such that
$
{\bm v}^\top \nabla^2 \varphi({\bm x}^*) {\bm v} > 0$, and 
${\bm v}^\top \nabla g_j(\varphi({\bm x}^*)) < 0.
$
The first condition implies that $\varphi$ is locally convex along ${\bm v}$, 
while the second indicates that the violated constraint decreases in that same direction.
Together, they ensure negative curvature in the composite penalty $\mathcal{V}$,
i.e., ${\bm v}^\top \nabla^2 \mathcal{V}({\bm x}^*) {\bm v} < 0$.
Hence, even if $\varphi$ has a strict local minimum at ${\bm x}^*$, 
the ReLU--L1 penalty $\mathcal{V}$ remains non-minimizing and allows gradient descent to escape.
\end{remark}

\paragraph{Perturbation Analysis.}
For a small perturbation $\bm{\delta}$, we have
$
\nabla \varphi(\bm{x}^* + \bm{\delta})
= \nabla^2 \varphi(\bm{x}^*) \bm{\delta} + o(\|\bm{\delta}\|),
$
which is nonzero since $\nabla^2 \varphi(\bm{x}^*) \succ 0$.  
By continuity of $\nabla g_j$, we obtain
$
\nabla \mathcal{V}(\bm{x}^* + \bm{\delta})
= \sum_{j \in I_{\bm{x}}}
\nabla \varphi(\bm{x}^* + \bm{\delta})^\top
\nabla g_j(\varphi(\bm{x}^* + \bm{\delta}))
\neq 0
$
for almost all small $\bm{\delta}$.

\paragraph{Escape from ${\bm x}^*$.}  
Since $\nabla \mathcal{V}({\bm x}) \ne 0$ in a neighborhood around ${\bm x}^*$, gradient descent will not be trapped at ${\bm x}^*$. Any initialization near ${\bm x}^*$ will result in descent away from the point.
\end{proof}

\begin{remark}[Generic Nondegeneracy of Constraint Gradients and Curvature Alignment]
\label{rmk:generic_nonvanishing}
The following structural assumptions are generic and sufficient for ensuring convergence to feasibility:

\begin{enumerate}
    \item \textbf{Nonvanishing gradients.}  
    For real analytic or $C^1$ non-constant $g_j$, the set  
    $Z_j := \{ {\bm x} : g_j({\bm x}) > 0, \nabla g_j({\bm x}) = 0 \}$  
    has measure zero, implying that $\nabla g_j(\varphi({\bm x}^*)) \ne 0$ almost surely for any violated constraint.

    \item \textbf{Gradient–curvature misalignment.}  
    Since $\nabla g_j$ vary independently, they are rarely aligned with the curvature of $\varphi$.  
    Consequently, the composite Hessian  
    $\nabla^2 \mathcal{V}({\bm x}^*) 
     = \sum_{j \in I_{{\bm x}^*}} \nabla^2 (g_j \circ \varphi)({\bm x}^*)$
    is generically indefinite whenever a direction ${\bm v}$ satisfies  
    ${\bm v}^\top \nabla^2 \varphi({\bm x}^*) {\bm v} > 0$ and  
    ${\bm v}^\top \nabla g_j(\varphi({\bm x}^*)) < 0$.
\end{enumerate}

Hence, under mild and generic assumptions, a strict local minimum of $\varphi$ with $\mathcal{V}({\bm x}^*) > 0$ cannot be a local minimum of $\mathcal{V}$, and gradient descent will escape.
\end{remark}

\subsection{Exclusion of Degenerate Convergence Scenarios}
\label{subapp:degen}

While the preceding results establish convergence to feasibility under mild regularity assumptions, it is important to examine whether pathological critical points or plateau-like regions could still prevent convergence. This section characterizes and rules out such degenerate scenarios.

\begin{remark}[Plateau Behavior and Subanalyticity]
\label{rmk:plateaus_subanalyticity}
One may wonder whether the penalty function
$ \mathcal{V}({\bm x}) = \sum_{j=1}^{n_c} \max(0, g_j(\varphi({\bm x})))$
can exhibit plateau-like behavior, i.e., regions where $\mathcal{V}$ remains constant and positive, within the infeasible region $\mathcal{D} := \{ {\bm x} : \mathcal{V}({\bm x}) > 0 \}$. We address two such scenarios below.

\textbf{Flat critical manifolds.} A flat manifold is a set $M \subset \mathcal{D}$ where $\nabla \mathcal{V}({\bm x}) = 0$ and $\mathcal{V}({\bm x}) = c > 0$ for all $x \in M$. These sets could trap gradient descent if they existed with positive measure. However, because $\mathcal{V}$ is subanalytic and $C^1$ on $\mathcal{D}$, it satisfies the Kurdyka--Łojasiewicz (KL) inequality near all critical points~\citep{bolte2007_clarke}. This rules out the existence of non-isolated flat critical sets unless $\mathcal{V}$ is locally constant, which we now argue is also structurally implausible.
Moreover, known convergence results for KL functions~\citep{attouch2013_convergence} imply that gradient descent cannot asymptotically converge to a non-isolated flat critical manifold unless it is initialized there. In typical smooth machine learning problems, such events occur with probability zero under random initialization.
Therefore, the subanalyticity of $\mathcal{V}$ implies that flat critical manifolds are unstable under gradient descent. 

\textbf{Constant regions.} Suppose, for contradiction, that $\mathcal{V}({\bm x}) = c > 0$ on an open subset $U \subset \mathcal{D}$. Then each active term $j \in I_{\bm x} := \{ j : g_j(\varphi({\bm x})) > 0 \}$ must be constant over $U$, implying that the compositions $g_j \circ \varphi$ are locally constant. This in turn forces their gradients to vanish: $\nabla (g_j \circ \varphi)({\bm x}) = 0$ for all $x \in U$. Unless $g_j \circ \varphi$ is identically constant—a non-generic scenario—this condition fails on open sets.

\textbf{Implication.} Together, subanalytic regularity and mild structural assumptions that $g_j \circ \varphi$ are not constant functions ensure that $\mathcal{V}$ cannot be locally constant on any open subset of $\mathcal{D}$. Therefore, genuine plateaus or flat manifolds that could trap gradient descent do not arise in typical settings.
\end{remark}

\begin{remark}[Critical Points and Optimization Challenges]
\label{rmk:critical_point_challenges}
The penalty function $\mathcal{V}({\bm x})$ is piecewise smooth and subanalytic on the infeasible region $\mathcal{D} := \{{\bm x} : \mathcal{V}({\bm x}) > 0\}$. A natural question is whether gradient descent could become trapped at infeasible critical points.
 Two classes of critical points could, in principle, obstruct convergence.

\textbf{Non-degenerate local minima.} Points ${\bm x}^* \in \mathcal{D}$ where $\nabla \mathcal{V}({\bm x}^*) = 0$, $\nabla^2 \mathcal{V}({\bm x}^*) \succ 0$, and $\mathcal{V}({\bm x}) > \mathcal{V}({\bm x}^*)$ locally. These may arise if the active constraint set $I_{{\bm x}^*} := \{ j : g_j(\varphi({\bm x}^*)) > 0 \}$ is fixed and the composite Hessian
    $
    \nabla^2 \mathcal{V}({\bm x}^*) = \sum_{j \in I_{{\bm x}^*}} \nabla^2 (g_j \circ \varphi)({\bm x}^*)
    $
    is positive definite. However, such configurations require fine alignment between $\nabla g_j$ and the curvature of $\varphi$, which is highly non-generic.

\textbf{Flat saddles.} Isolated critical points where $\nabla \mathcal{V}({\bm x}^*) = 0$ and the Hessian is degenerate (e.g., zero eigenvalues). These may occur under degeneracy or saturation in $\varphi$, or when multiple $\nabla g_j(\varphi({\bm x}))$ vanish. While subanalyticity rules out flat critical \emph{manifolds}, it does not preclude such isolated saddles.

\subsubsection{Taxonomy of Feasibility Convergence}
Feasibility convergence is ensured by one of two assumptions:
\begin{itemize}
    \item \emph{Structural:} If every ${\bm x}^* \in \mathcal{D}$ with $\mathcal{V}({\bm x}^*) > 0$ has some active constraint $g_j(\varphi({\bm x}^*)) > 0$ with $\nabla g_j(\varphi({\bm x}^*)) \ne 0$, then $\nabla \mathcal{V}({\bm x}^*) \ne 0$, so infeasible stationary points are excluded. This condition appears in Theorem~\ref{thm:relu_exterior_convergence} and rules out non-degenerate local minima.

    \item \emph{Dynamical:} Alternatively, assume that no infeasible stationary point ${\bm x}^* \in \mathcal{D}$ attracts nearby trajectories under gradient descent
    $
    {\bm x}^{(k+1)} = {\bm x}^{(k)} - \eta \nabla \mathcal{V}({\bm x}^{(k)}).
    $
    That is, for every neighborhood $B_\rho({\bm x}^*) := \{ {\bm x} : \|{\bm x} - {\bm x}^*\| < \rho \}$, some ${\bm x} \in B_\rho({\bm x}^*) \cap \mathcal{D}$ generates a trajectory that does not converge to ${\bm x}^*$. This allows such critical points to exist but ensures they do not trap iterates, especially relevant for flat saddles or degenerate cases not excluded structurally.
\end{itemize}

\textbf{Implication.} 
Whether through gradient non-vanishing or non-attraction, infeasible critical points are generically avoided. Combined with the subanalytic geometry of $\mathcal{V}$, these conditions help explain why gradient descent almost always escapes infeasible regions in practice. Moreover, stochastic methods and perturbation-based algorithms~\citep{jin2017escape} have been shown to escape strict and flat saddles in polynomial time under mild conditions.

\end{remark}

\begin{proposition}[Characterization of Critical Point Behavior]
\label{prop:critical_point_taxonomy}
Let ${\bm x}^* \in \mathcal{D} := \{ {\bm x} \in \mathbb{R}^n : \mathcal{V}({\bm x}) > 0 \}$ be a stationary point of the ReLU-penalized objective 
$
\mathcal{V}({\bm x}) = \sum_{j=1}^{n_c} \max(0, g_j(\varphi({\bm x}))).
$
Then, the behavior of gradient descent near ${\bm x}^*$ depends on the type of critical point as given in Table~\ref{tab:critical_points}.
\end{proposition}

\begin{table}[htb!]
\vspace{-6pt}
\centering
\scriptsize
\setlength{\tabcolsep}{6pt}
\renewcommand{\arraystretch}{0.85}
\caption{Gradient descent behavior at different types of critical points of $\mathcal{V}$ or $\varphi$.}
\label{tab:critical_points}

\resizebox{0.85\linewidth}{!}{
\begin{tabular}{lccc}
\toprule
\textbf{Critical Point Type} 
& \textbf{Addressed in} 
& \textbf{GD Converges?} 
& \textbf{Feasible?} \\
\midrule
Strict local minimum of $\varphi$             
& Thm~\ref{thm:local_minima_h}               
& No                                     
& \textcolor{green}{Yes} \\

Strict saddle of $\mathcal{V}$               
& Thm~\ref{thm:loja_convergence}            
& No (a.s.)                       
& \textcolor{green}{Yes} \\

Non-isolated saddle of $\mathcal{V}$         
& Thm~\ref{thm:loja_convergence}            
& No (generic)                         
& \textcolor{green}{Yes} \\

Flat critical manifold ($\mathcal{V} > 0$) 
& Rem.~\ref{rmk:plateaus_subanalyticity}    
& No (subanalytic)                         
& \textcolor{green}{Yes} \\

Locally constant region of $\mathcal{V}$     
& Rem.~\ref{rmk:plateaus_subanalyticity}    
& No (subanalytic)                         
& \textcolor{green}{Yes} \\

Non-degenerate saddle of $\mathcal{V}$       
& Thm~\ref{thm:loja_convergence}            
& No (stable)           
& \textcolor{green}{Yes} \\

Non-degenerate local minimum of $\mathcal{V}$ 
& Rem.~\ref{rmk:critical_point_challenges}  
& Yes (rare)                  
& \textcolor{red}{No} \\

Flat saddle of $\mathcal{V}$                 
& Rem.~\ref{rmk:critical_point_challenges}  
& Unclear                                  
& Open \\

Degenerate saddle of $\mathcal{V}$           
& Rem.~\ref{rmk:critical_point_challenges}  
& Unclear                                  
& Open \\

\bottomrule
\end{tabular}
}
\vspace{-6pt}
\end{table}

In summary:
\begin{itemize}
    \item \textbf{Feasibility convergence justification.} Infeasible critical points are either excluded structurally (via gradient non-vanishing; see Theorem~\ref{thm:relu_exterior_convergence}) or are assumed to be non-attracting (see Remark~\ref{rmk:critical_point_challenges}). These complementary perspectives explain why convergence to feasibility occurs in practice.

    \item \textbf{Strict local minima of $\varphi$} are ruled out as attractors by Theorem~\ref{thm:local_minima_h}, since they are not minima of $\mathcal{V}$.
    
    \item \textbf{Strict saddles}, \textbf{non-isolated saddles}, \textbf{flat manifolds}, and \textbf{locally constant regions} are generically avoided due to the subanalytic structure of $\mathcal{V}$, which guarantees the Łojasiewicz (KL) property and precludes convergence to non-isolated critical sets, see Theorem~\ref{thm:loja_convergence},  and Remark~\ref{rmk:plateaus_subanalyticity}.

    \item \textbf{Non-degenerate saddles} are generically escaped by gradient descent due to instability in directions of negative curvature.

    \item \textbf{Non-degenerate local minima of $\mathcal{V}$} with $\mathcal{V}({\bm x}^*) > 0$ may exist but are structurally rare and require unlikely gradient-curvature alignment, see Remark~\ref{rmk:critical_point_challenges}.

    \item \textbf{Degenerate or flat saddles} are not ruled out by subanalyticity alone. While rare in practice, they remain an open challenge. Their avoidance may require additional randomness or second-order mechanisms, see Remark~\ref{rmk:critical_point_challenges}.

\end{itemize}

\section{Experimental Details}
\label{app:exp_details}

\subsection{Training Hyperparameters}
All learning-based models are trained with the AdamW optimizer (learning rate $10^{-3}$, weight decay $10^{-4}$), batch size $64$, and a maximum of $200$ epochs, with early stopping based on validation loss. The constraint penalty weight is fixed at $\lambda=100$. The feasibility projection uses a step size $\eta=0.01$ and at most $1000$ iterations.

\subsection{Neural Network Architectures}
The solution-mapping network $\pi_{\Theta_1}$ uses five fully connected layers with ReLU activation.  The correction network $\varphi_{\Theta_2}$ consists of four fully connected layers with ReLU activations, along with Batch Normalization and dropout (rate $0.2$).Hidden widths scale with the problem size:
\begin{itemize}
    \item IQP/INP: widths $\{64, 128, 256, 512, 1024, 2048\}$ depending on the input dimension.
    \item MIRB: widths $\{4, 16, 128, 256, 1024\}$ for problem sizes from $2$ to $10{,}000$ variables.
    \item BLP: input dimension $120$, hidden layer $256$, output dimension $360$.
\end{itemize}

\subsection{Software and Hardware}
Experiments were run on a workstation equipped with two Intel Silver 4216 CPUs (2.1\,GHz), 64\,GB RAM, and an NVIDIA V100 GPU. Neural models were implemented in PyTorch~2.5.0+cu122 and NeuroMANCER~1.5.2. Convex problems were solved with \gurobi{}~11.0.1, and nonconvex problems with \scip{}~9.0.0 + \ipopt{}~3.14.14.

\section{Benchmark Generation Details}
\label{app:bench_details}

\subsection{Integer Quadratic Problems}
\label{subapp:qp_gen}

The integer quadratic problems (IQPs) follow the data-generation protocol of \citet{donti2021dc3}. The matrix $\bm{Q}$ is diagonal with entries drawn from the uniform distribution $\mathcal{U}[0,0.01]$, the linear term is sampled as $\bm{p}\!\sim\!\mathcal{U}[0,0.1]^n$, and the constraint matrix is generated from a Gaussian distribution $\bm{A}\!\sim\!\mathcal{N}(0,\,0.1)^{m\times n}$. The right-hand side $\bm{b}$, which serves as the parametric input, is sampled from $\mathcal{U}[-1,1]^m$. To adapt the problem to the mixed-integer setting, all variables are restricted 
to be integer and equality constraints are removed to maintain feasibility.

\subsection{Integer Nonconvex Problems}
\label{subapp:nc_gen}

The INPs extend IQPs by replacing the linear objective term with a trigonometric component. The parameters $\bm{Q}$, $\bm{p}$, $\bm{A}$, and $\bm{b}$ follow the same distributions as in IQPs.. To introduce additional parametric variability in the feasible region, we draw a vector $\bm{d}$ from the uniform distribution $\mathcal{U}[-0.5,0.5]^m$ and modify the constraint matrix via  $\bm{A}\leftarrow \bm{A} + [\bm{d}, -\bm{d}, 0,\dots,0].$ This small perturbation makes the constraints instance-dependent.

\subsection{Mixed-integer Rosenbrock Problems}
\label{subapp:rb_gen}

In MIRBs, the vectors $\bm{p}, \bm{Q} \in \mathbb{R}^n$ are sampled once from a standard normal distribution and fixed across all instances. The parameters $\bm{a}$ and $b$ vary across instances and serve as the inputs to the learning model: each entry of $\bm{a}$ is drawn from the uniform distribution $\mathcal{U}[0.5,4.5]$, while $b$ is sampled from $\mathcal{U}[1,8]$.

The parameters $(\bm{a},b)$ jointly control the shape of the nonlinear Rosenbrock valley and the geometry of the feasible region, yielding a rich parametric MINLP family that scales from low dimensions to problems with tens of thousands of variables.

\section{Binary Linear Programs}
\label{app:blps}

we additionally evaluate our methods on BLPs from the \emph{Obj Series 1} of the MIP Workshop 2023 Computational Competition \citep{MIPcc23}. This benchmark consists of 50 MILP instances sharing an identical constraint matrix but differing in 120 of the 360 objective coefficients. We reserve all 50 original instances as a held-out test set. To obtain a sufficiently large training distribution with the same structural properties, we generate synthetic instances by uniform sampling new objective vectors $\bm{c}$ within the coefficient range observed in the benchmark, while keeping the constraint matrix fixed.

\Cref{tab:ILP_Stat} summarizes the performance of various optimization methods on the MILP benchmark: Both learning-based methods (RC and LT) demonstrate the ability to generate high-quality feasible solutions efficiently, with RC even surpassing the heuristic-based method N1 in terms of objective value. However, N1 is the fastest method overall, showcasing the robustness and efficiency of the heuristic in the MILP solver. EX achieved the best objective values but required significantly more computation time. Notably, the training time of our method is approximately 120 seconds, making it well-suited for applications with repeated problem-solving.

\begin{table}[htb!]
\centering
\caption{Comparison of Optimization Methods on the MILP. Each method is evaluated on 50 test instances. We report the mean and median objective values (``Obj Mean’’ and ``Obj Median’’), the fraction of feasible solutions (``Feasible’’),  and the average inference or solving time per instance (``Time (Sec)’'). Since the MILP is a minimization problem, smaller objective values are better.
}
\label{tab:ILP_Stat}

\resizebox{0.65\linewidth}{!}{%
\begin{tabular}{l|rrrr}
\toprule[1pt]\midrule[0.3pt]
\textbf{Method} & \textbf{Obj Mean} & \textbf{Obj Median} & \textbf{Feasible} & \textbf{Time (Sec)} \\
\midrule
RC & $9745.90$  & $9763.00$  & $100\%$ & $0.04$ \\
LT & $14149.00$ & $14149.00$ & $100\%$ & $0.04$ \\
EX & $8756.80$  & $8747.00$  & $100\%$ & $28.91$ \\
N1 & $11901.10$ & $11933.00$ & $100\%$ & $0.01$ \\
\midrule[0.3pt]\bottomrule[1pt]
\end{tabular}
} 

\end{table}

\section{Additional Visualizations of Constraint Violations}
\label{app:viol_figures}

This section provides additional visualizations of constraint violations for the RC and LT baselines, complementing the quantitative results reported in the main text. For IQPs and INPs, violations are sparse and of small magnitude, whereas for MIRBs, violations is more pronounced and widespread.

\begin{figure*}[ht]
\vspace{-10pt}
\centering

\begin{minipage}{0.42\textwidth}
    \centering
    \includegraphics[width=\linewidth]{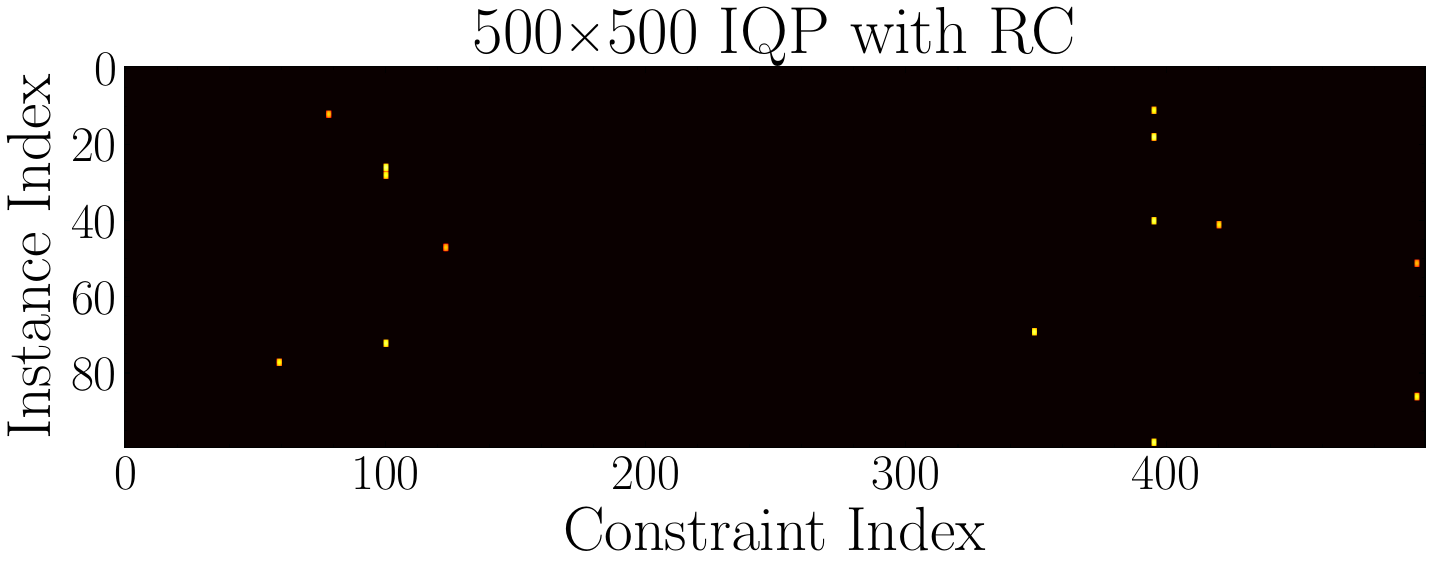}
\end{minipage}
\begin{minipage}{0.49\textwidth}
    \centering
    \includegraphics[width=\linewidth]{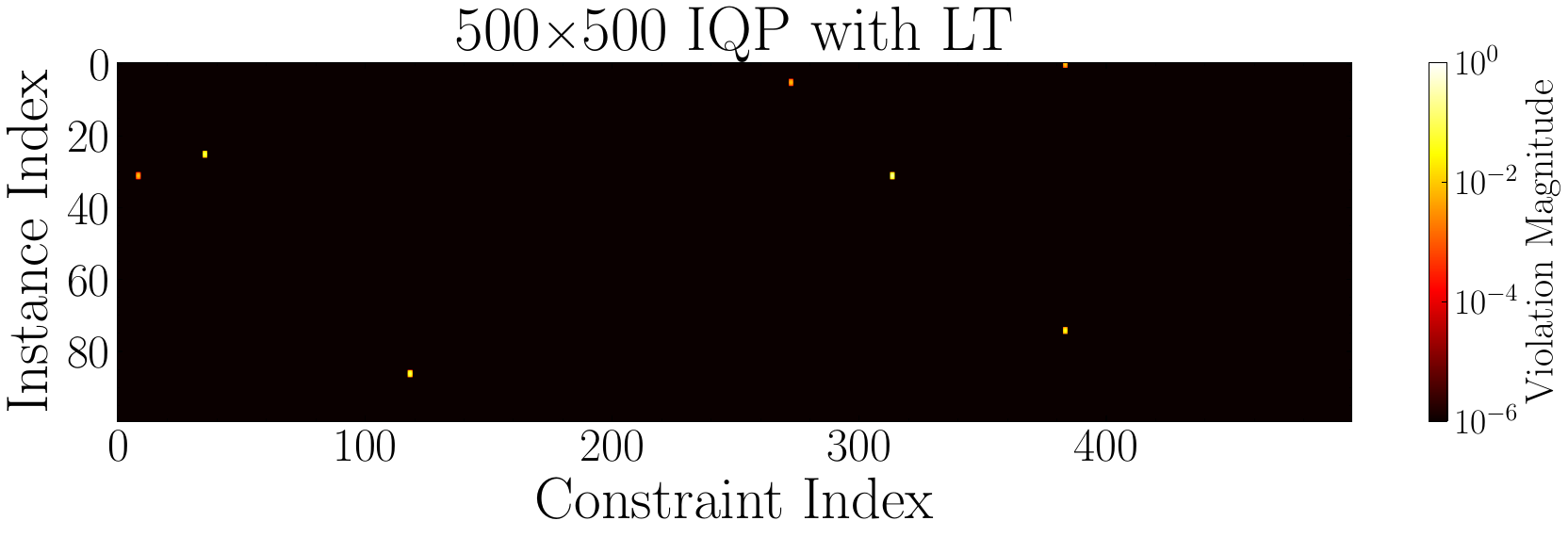}
\end{minipage}

\vspace{-10pt}

\begin{minipage}{0.42\textwidth}
    \centering
    \includegraphics[width=\linewidth]{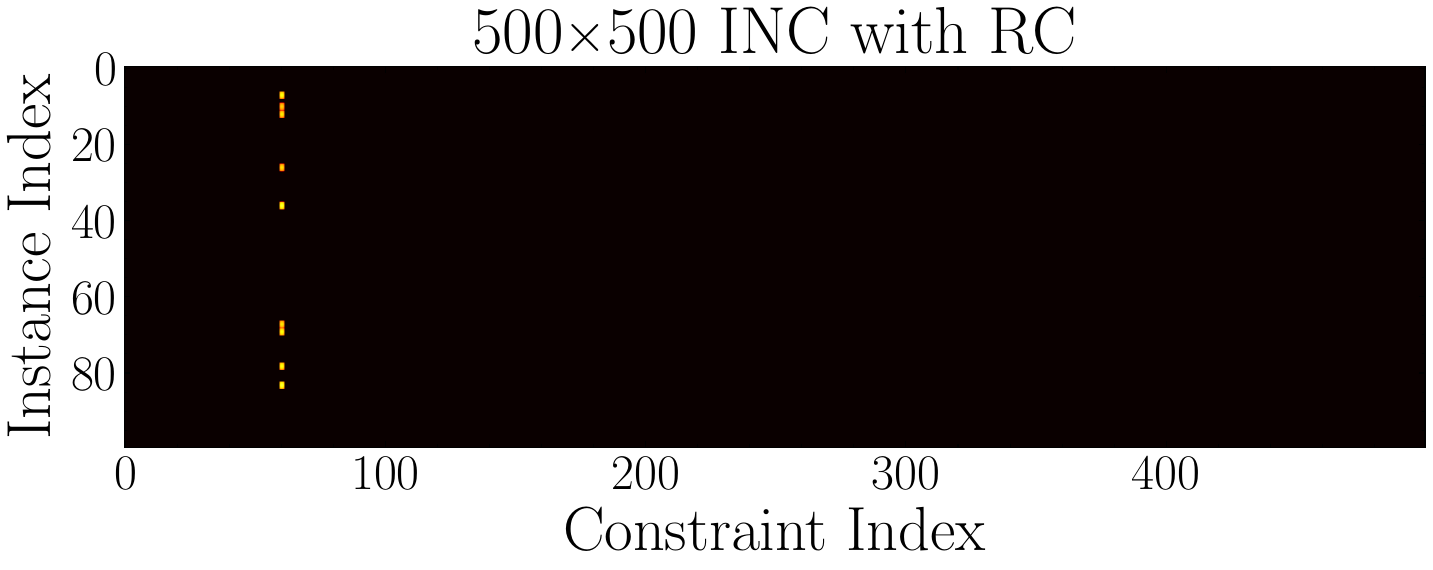}
\end{minipage}
\begin{minipage}{0.49\textwidth}
    \centering
    \includegraphics[width=\linewidth]{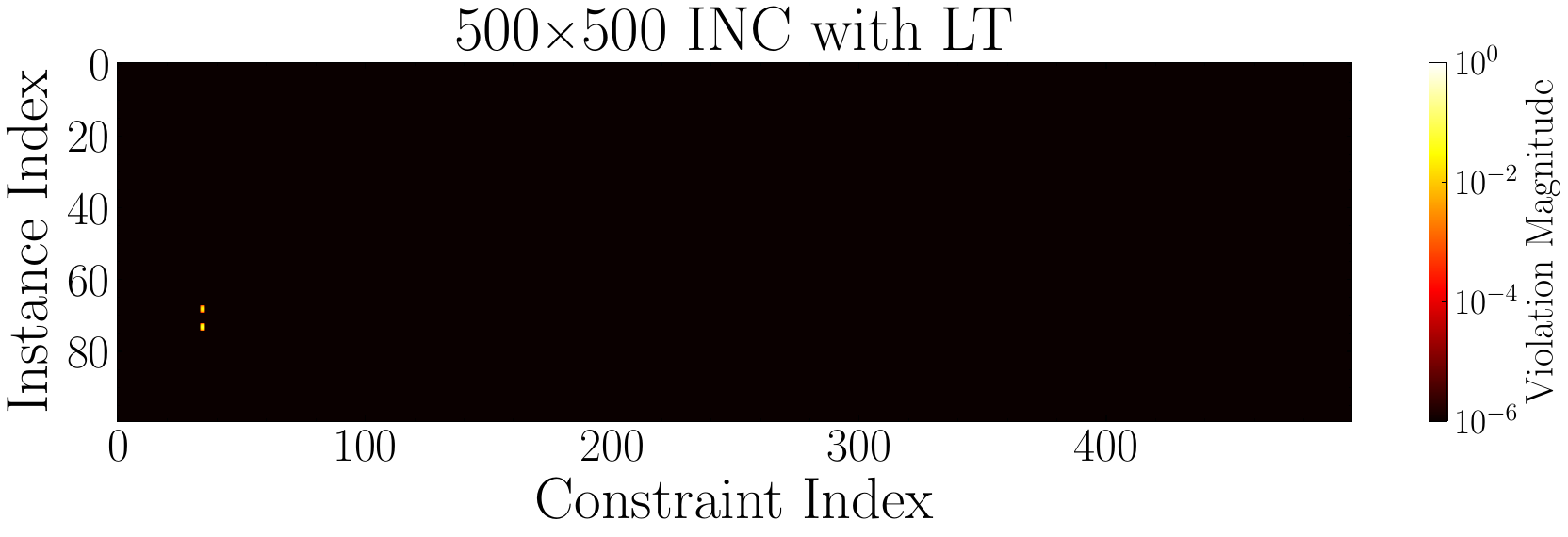}
\end{minipage}

\vspace{-10pt}

\begin{minipage}{0.42\textwidth}
    \centering
    \includegraphics[width=\linewidth]{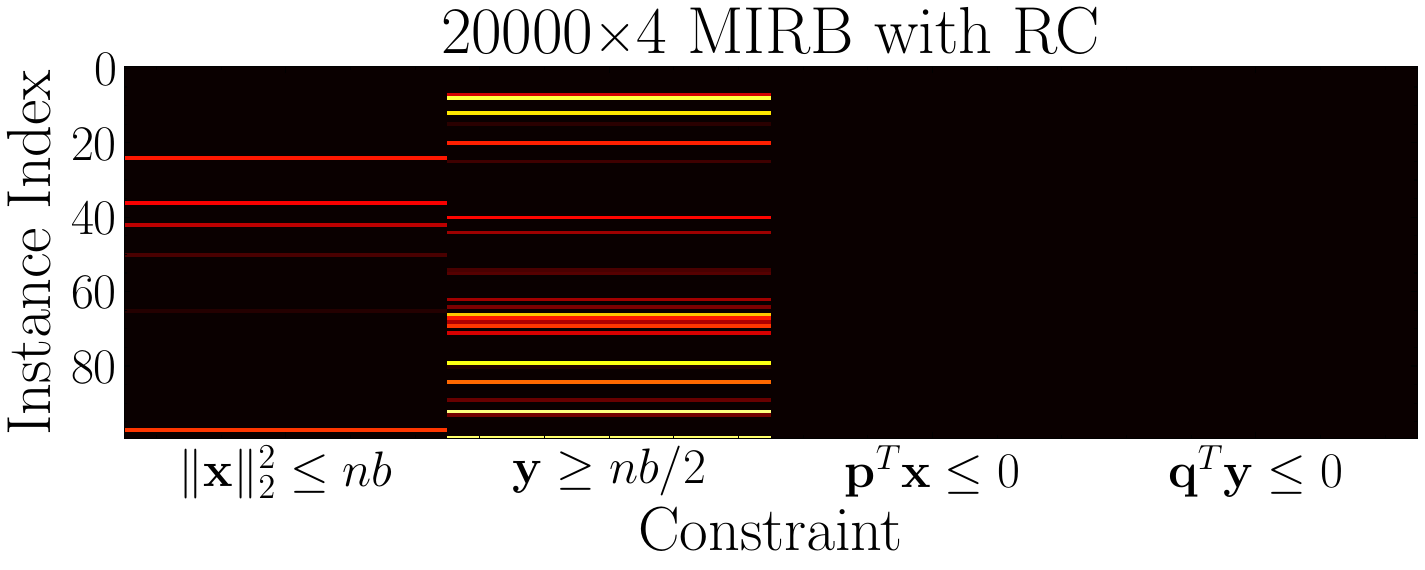}
\end{minipage}
\begin{minipage}{0.49\textwidth}
    \centering
    \includegraphics[width=\linewidth]{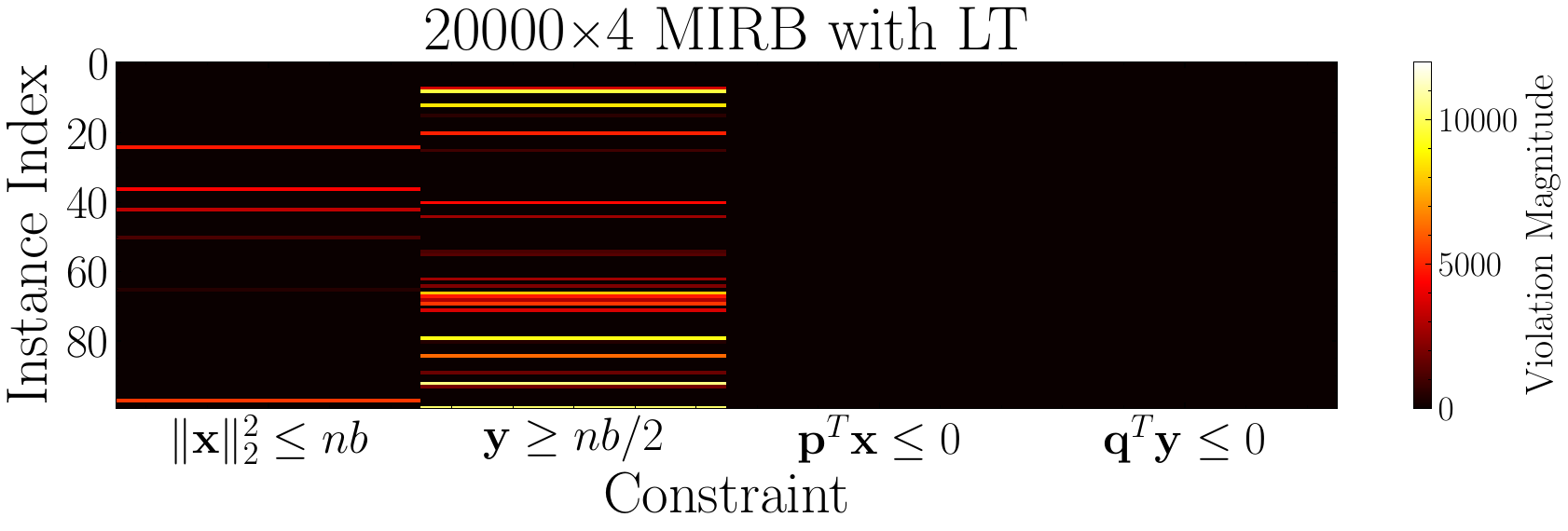}
\end{minipage}

\caption{Constraint violation heatmaps for RC (left column) and LT (right column) across IQPs (top row), INPs (middle row), and MIRBs (bottom row), each evaluated on 100 test instances. Lighter colors indicate greater violation magnitude.}
\label{fig:viol_all}
\vspace{-10pt}
\end{figure*}

\section{STE Rounding Baseline}
\label{app:ste_algo}

\begin{algorithm}[H]
\small
\setlength{\baselineskip}{0.85\baselineskip}
\caption{Rounding with Straight-Through Estimator (Forward Pass)}
\label{algo:ste}
\begin{algorithmic}[1]
\State \textbf{Input:} instance ${\bm{\xi}^i}$; predictor $\pi_{\Theta_1}$
\State Predict relaxed solution $\bar{\bm{x}}^i\gets\pi_{\Theta_1}({\bm{\xi}^i})$
\State Round down: $\hat{{\bm{x}}}^i_z \gets \lfloor \bar{{\bm{x}}}^i_z \rfloor$ 
\State Fractions: ${\bm v}^i \gets \bar{{\bm{x}}}^i_z - \hat{{\bm{x}}}^i_z $
\State Directions: ${\bm b}^i \gets \mathbb{I}({\bm v}^i > 0.5)$
\State Update: $\hat{{\bm{x}}}^i_z \gets \hat{{\bm{x}}}^i_z + {\bm b}^i$
\State \textbf{Output:} $\hat{\bm{x}}^i$
\end{algorithmic}
\end{algorithm}


\end{document}